\documentclass[3p]{elsarticle}

\usepackage{amsmath,amssymb,amstext}
\usepackage{tabularx}
\usepackage{bm}
\usepackage{mathrsfs}
\usepackage{fancyhdr}
\usepackage{booktabs}
\usepackage{longtable}
\usepackage{multirow}
\usepackage{float}
\usepackage{braket}
\usepackage{amsthm}
\usepackage{bbold}
\usepackage{comment}
\usepackage[normalem]{ulem}

\usepackage[pdftex,pagebackref=false]{hyperref}
\hypersetup{
    unicode=false,          
    pdffitwindow=false,     
    pdfstartview={FitH},    
    pdfnewwindow=true,      
    colorlinks=true,        
    linkcolor=blue,         
    citecolor=green,        
    filecolor=magenta,      
    urlcolor=cyan           
}

\setlength{\parskip}{\medskipamount}
\setlength{\parindent}{0pt}
\allowdisplaybreaks

\newtheorem{example}{Example}
\newtheorem{remark}[example]{Remark}
\newtheorem{definition}[example]{Definition}
\newtheorem{proposition}[example]{Proposition}

\newtheorem*{proposition*}{Proposition}

\newenvironment{bias}[1]
  {\innerbias}
  {\endinnerbias}

\begin{document}

\title{Multi-Excitation Projective Simulation with a Many-Body Physics-Inspired Inductive Bias}

\author[1]{Philip A. LeMaitre\corref{cor1}}
\ead{Philip.Lemaitre@uibk.ac.at}
\cortext[cor1]{Corresponding author}
\affiliation[1]{organization={University of Innsbruck, Institute for Theoretical Physics},
addressline={Technikerstrasse 21a},
postcode={A-6020},
city={Innsbruck},
country={Austria}}
\author[2]{Marius Krumm}
\ead{Marius.Krumm@uibk.ac.at}
\affiliation[2]{organization={University of Innsbruck, Institute for Theoretical Physics},
addressline={Technikerstrasse 21a},
postcode={A-6020},
city={Innsbruck},
country={Austria}}
\author[3]{Hans J. Briegel}
\ead{Hans.Briegel@uibk.ac.at}
\affiliation[3]{organization={University of Innsbruck, Institute for Theoretical Physics},
addressline={Technikerstrasse 21a},
postcode={A-6020},
city={Innsbruck},
country={Austria}}

\date{\today}

\begin{abstract}
With the impressive progress of deep learning, applications relying on machine learning are increasingly being integrated into daily life. However, most deep learning models have an opaque, oracle-like nature that makes it difficult to interpret and understand their decisions. This problem led to the development of the field known as \emph{eXplainable Artificial Intelligence} (XAI). One method in this field known as \emph{Projective Simulation} (PS) models a chain-of-thought as a random walk of a particle on a graph with vertices that have concepts attached to them. While this description has various benefits, including the possibility of quantization, it cannot be naturally used to model thoughts that combine several concepts simultaneously. To overcome this limitation, we introduce \emph{Multi-Excitation Projective Simulation} (MEPS), a generalization that considers a chain-of-thought to be a random walk of several particles on a hypergraph. A definition for a dynamic hypergraph is put forward to describe the agent's training history along with applications to AI and hypergraph visualization. An inductive bias inspired by the remarkably successful few-body interaction models used in quantum many-body physics is formalized for our classical MEPS framework and employed to tackle the exponential complexity associated with naive implementations of hypergraphs. We prove that our inductive bias reduces the complexity from exponential to polynomial, with the exponent representing the cutoff on the number of particles that can interact. We numerically apply our method to two toy model environments and a more complex scenario that models the diagnosis of a broken computer. These environments demonstrate the resource savings provided by an appropriate choice of the inductive bias, as well as showcasing aspects of interpretability. A quantum model for MEPS is also briefly outlined and some future directions for it are discussed.
\end{abstract}

\begin{keyword}
projective simulation\sep reinforcement learning\sep inductive bias\sep explainable AI\sep quantum-inspired
\end{keyword}

\maketitle

\section{Introduction} \label{Section:Introduction}

Deep learning has become a powerful numerical tool, with various applications all over science and technology~\cite{MLinPhysics, Medicine, DeepLearningApplications}. At the heart of this technological revolution are Artificial Neural Networks (ANN), parameterized function ans\"{a}tze trained via gradient descent methods to achieve an ideal input-output behavior on data~\cite{DeepLearningBook}. Despite the enormous success of ANNs, their complex and mostly problem-agnostic structure makes it difficult to understand their ``reasoning process'', essentially turning ANNs into oracles. This, along with other pertinent issues related to reliability and trustworthiness \cite{LiuJinRobust,Causal1,Causal2,Adversarial1,Adversarial2}, led to the development of the field known as \emph{eXplainable Artificial Intelligence} (XAI)~\cite{XAI1, XAI2, XAI3, XAI4, XAI5, XAI6, XAITrustworthy}.

One promising approach to XAI realizes that many conscious human decision-making processes take the form of a \emph{chain-of-thought}~\cite{LLM1, LLM2}. The framework of Projective Simulation (PS)~\cite{Briegel_Cuevas2012, Mautner_Makmal2015} combines a model of deliberation, based on episodic memory, with reinforcement learning ~\cite{RLbook}. It thereby extracts the essential components of chains-of-thought, and realizes that deliberation processes can be understood as a random walk of a single particle on a graph with vertices representing concepts or thoughts. Since its first proposal in \cite{Briegel_Cuevas2012}, PS has been successfully applied to many domains~\cite{Foraging, Bees, Robotics, Navigation, PSforQuantumExperiments}; with ~\cite{Robotics,PSforQuantumExperiments} involving the deployment of PS in real-world settings. Since PS is a special case of MEPS, all its previous applications also provide suitable application domains for MEPS. In most of these applications, vertices in the graph (referred to as \textit{clips}) have a more basic interpretation, e.g., as remembered percepts, or actions, or sensorimotoric memories more generally. 

However, the representation of chains-of-thought as simple paths in a graph is limited and cannot easily capture thoughts which are most naturally understood by taking their composite structure into account. A wide range of applications combines several concepts to arrive at new concepts. Examples include logical deductions, small arithmetic calculations, thoughts that compare the advantages and disadvantages of a potential decision, thoughts that take into account the results of early steps in the deliberation, etc. In basic PS, a single excitation/particle has to represent all the short-term information used by the agent for the current decision, not allowing it to disentangle the structure of the thoughts. Therefore, in this paper, we introduce \emph{Multi-Excitation Projective Simulation} (MEPS), an extension of PS to multiple particles/excitations. In this extension, the transition probabilities are allowed to depend on the full particle configuration, allowing MEPS to model composite thoughts. Also here, each vertex in the graph represents an elementary concept and an excitation on a vertex expresses whether this concept is currently relevant. However, now each currently relevant concept can be represented by a separate excitation, allowing for the memory structure to be directly represented in a more disentangled fashion. Mathematically, our random walk steps now map sets of vertices to sets of vertices, naturally leading to the mathematical notion of hypergraphs~\cite{GraphBook1, GraphBook2, Dai_Gao2023}.

A naive implementation of MEPS tends to exhibit a complexity exponential in the size of the semantic graph. The root of this exponential complexity is the fact that the size of the power set of the vertices scales exponentially with the number of vertices. Therefore, in this paper, we also present an inductive bias that reduces this complexity to a low-degree polynomial. Our inductive bias is a classical analogue of the typical structures found in many-body physics (MBP)~\cite{ManyBody1, ManyBody2}. In MBP, many if not most phenomena can be understood as arising from fundamental elementary interactions of only a handful of particles. In particular, the standard model of particle physics, our most fundamental description of nature so far, only has interactions of at most four elementary particles~\cite{QFT1, QFT2, QFT3}. 



In this paper, we use many-body physics and its few-body interactions as inspiration for formalizing an inductive bias in \textbf{classical} machine learning. We prove that our inductive bias reduces the number of trainable parameters and the complexity of one random walk step from exponential to polynomial in the number of graph vertices. The degree of the polynomial is given by the cutoff for how many particles are allowed to interact. Furthermore, to limit the lengths of the random walks, we introduce modifications of our inductive bias suitable for layered feed-forward hypergraphs.

We numerically apply our MEPS methodology and the inductive bias -- along with a baseline comparison to agents based on standard PS, standard Q-learning, and a tabular PS-inspired multi-layer Q-learning agent -- to three synthetic environments. The first environment is a toy model extending the \emph{Invasion Game} of \cite{Briegel_Cuevas2012}, which can be seen as a special case of \emph{contextual bandit} problems~\cite{Bandits1, Bandits2}. Here, we modify the Invasion Game to include irrelevant information, calling it the \emph{Invasion Game with Distraction}. Its simplistic nature makes it a well-suited example to discuss the impact of different choices of the inductive bias. The second environment is a modification of the first with more actions and a reward that incorporates deceptive strategies used by the attacker; we call it the \emph{Deceptive Invasion Game}. The final environment models the real-world diagnosis and repair process of a broken computer, which we call the \emph{Computer Maintenance} environment. In this environment, we primarily showcase the interpretability aspects of MEPS agents, using the inductive bias to further illustrate the advantages of reducing agent complexity (over standard PS in particular). For this purpose, we train multi-layered MEPS agents. In the intermediate layer, the MEPS agent hypothesizes about the causes behind observed symptoms of the malfunctioning computer before picking certain fixes that it can apply.  

Since the inductive bias still allows for cyclic hypergraphs, the random walk has an infinite worst-case running time. This motivates us to formulate modifications of our inductive bias and the corresponding hypergraphs for which low order polynomial runtimes are guaranteed. More specifically, we prove worst case upper bounds for the runtimes of the random walks for layered hypergraphs. These worst case paths are rarely encountered in practice, so we leave the formulation and formal analysis of these inductive biases for the appendix.

The paper is organized as follows: First, in Section~\ref{Section:RelWorks}, we explain how MEPS is situated within the broader scope of XAI and reinforcement learning literature. Then, in Section~\ref{Section:PS}, we describe Single-Excitation PS. In Section \ref{Section:MEPS}, we present and define our Multi-Excitation PS agent, along with a dynamic hypergraph to model the agent's training history in Subsection \ref{Section:DynamicHypergraphTrainingHistory}. In Section~\ref{Section:Bias}, we develop the formalization of our inductive bias, including the exponential reduction in trainable parameters. However, since the motivation involves the quantum physical description of many-body systems, we will delay the presentation of this motivation towards the end in Section~\ref{Section:Quantum}.Before, in Section~\ref{Section:Numerics}, the numerical experiments from the three learning scenarios we consider can be found in Section \ref{Section:Numerics}. Afterwards, as already stated, we present the quantum many-body physical motivation of our classical inductive bias in Section~\ref{Section:Motivation}. Based on this quantum physical foundation, we then propose approaches towards an actual quantum MEPS agent in Section \ref{Section:QuantumMEPS}. Finally, in Section \ref{Section:Conclusions}, we discuss our results and suggest some promising future directions for the MEPS framework.

In ~\ref{Appendix:BrokenCompTrainDetails}, we present more details on the numerical experiments. In ~\ref{Appendix:OtherInductiveBiases}, we present modifications of our inductive bias for layered hypergraphs which guarantee polynomial upper bounds on the number of steps in the random walk. Afterwards, in~\ref{Appendix:RelationHypergraphs}, we prove that our inductive bias agents are fully compatible with the definition of MEPS agents given in Section~\ref{Section:MEPS}. Then, in ~\ref{Appendix:Complexity}, we prove the aforementioned bounds for different variants of the inductive bias.

\section{Related Works}\label{Section:RelWorks}

The field of XAI~\cite{XAI1, XAI2, XAI3, XAI4, XAI5, XAI6} is dedicated to making AI systems more human-understandable. One can broadly distinguish three different approaches. Concerning the success of ANNs, \emph{model-agnostic} explanations seek to make the decisions of artificial neural networks more understandable without affecting the architecture or training of the model. A large focus of the research community is on \emph{feature importance} methods, such as Shapley values~\cite{Shapley}. These measure the strength with which a part of the input or of the model contributes to the decision. Another commonly mentioned method is given by \emph{Local Interpretable Model-agnostic Explanations} (LIME)~\cite{LIME}. These methods consider an environment of a data point, and within this environment they approximate the model with a much simpler model, e.g. a linear approximation. Another important class of explanation methods within the family of model-agnostic interpretations is given by so-called \emph{counterfactual explanations}. A typical use-case is the formulation of ``\emph{What if something were different?}'' questions to see how hypothetical changes would influence the model's decisions~\cite{CounterfactualMedicine}.

A second approach to XAI is specific to \emph{Large Language Models} (LLM). These models can communicate with humans in natural language, providing an inherent advantage concerning their understandability. Crucially, the term \emph{chain of thought} was popularized within this setting~\cite{LLM1, LLM2}. The basic idea is to explicitly prompt a LLM to provide a step-by-step reasoning process to answer a question formulated with natural language. This provides a human-understandable justification for an answer. Furthermore, it turned out that the accuracy of the final answer itself also increased. MEPS takes a very different approach to chains of thought, not relying on natural language and the associated expensive costs, in particular regarding big data. Instead, the idea is to design a problem-specific hypergraph in such a way that a random walk path helps the user understand how to solve a problem. The excitations then can be interpreted as the currently relevant hypergraph semantics, while a random walk step is a formalization of the thought transitioning to the next relevant semantics embedded into the hypergraph design.

The last broad class of approaches to XAI is given by \emph{mechanistic interpretability}~\cite{MechanisticInterpretability}. This class seeks to obtain a detailed understanding of the individual steps of decision making processes, e.g. by discovering analogous sub-circuits~\cite{AutomatedCircuitDiscovery}. The extreme form are models with \emph{inherent interpretability} in which the architecture is designed right from the start to allow humans to understand its decision making process. Commonly mentioned examples are linear models~\cite{LIME} and decision trees~\cite{DecisionTrees}. \emph{Causal models}~\cite{Causal1, Causal2} intrinsically model environments and agents using cause-effect relations of observables, and promise to overcome harmful spurious correlations in data. The field of \emph{NeuroSymbolic AI} (NeSy)~\cite{NeSy1, NeSy2} works towards combining the rigorous explanation power of mathematical logic with the efficiency of ANN methods such as gradient descent. At last, also PS and our MEPS belong into the category of intrinsically interpretable models. 



Beyond the endeavour to use explainable AI to increase human understanding of the application domain, XAI is commonly mentioned together with the goal of trustworthiness~\cite{TrustResponsible, XAITrustworthy}. As AI becomes more integrated into human activities involving decision-making that impacts the lives of other people, it is important that we can trust these systems to make ethically responsible choices we would normally entrust to humans -- even more so since many of these systems are opaque at their foundation. Constructing transparent agent architectures should be the first step towards helping human experts develop machine learning models which are inherently more trustworthy. It is the purpose of this work to introduce such an architecture in the form of MEPS, leaving the philosophical considerations associated with the notion of trustworthiness to other experts for the time being \cite{Medicine,XAITrustworthy,TrustResponsible,SafeReinforcementLearning,XAI5,XAI6}.  


Both PS and the MEPS methodology introduced in this work are qualitatively very different from ANNs. The discrete nature of the (hyper)graph, the random walks and the update rule makes MEPS a \emph{tabular method}. A standard example for tabular methods are \emph{Q-learning} agents~\cite{RLbook}. Its tabular nature means that MEPS inherits the associated scalability problems also afflicting Q-learning. However, Q-learning only provides a look-up table for observation-action pairs, but does not provide a refined decision-making process involving intermediate steps. In particular, tabular Q-learning only provides interpretations in the form of a lookup table for reflex agents. The fact that MEPS is \textbf{not} a reflex agent is reflected in the fact that MEPS agents can be modelled to have several tabular layers, instead of just one, and that a decision-making process takes several random walk steps. In particular, such a refined layered structure gives several avenues for counterfactual interpretations specific to MEPS. The sampling in intermediate layers means that the agent decides that certain atomic clips (and their associated semantics) are relevant for the current deliberation, when it could have picked other atomic clips. One can even counterfactually alter the decision-making process by manually adding or removing excitations during a deliberation. Note that Q-learning can also be extended beyond the reflex agent paradigm by using PS as an inspiration to define a notion of multi-layered Q-learning agents (see Subsection \ref{Subsection:ComputerMaintenance} for details).

Besides explainability, another key topic in our paper is given by inductive biases. In machine learning, the term \emph{inductive bias}~\cite{InductiveBias1, InductiveBias2, InductiveBias3} refers to restrictions or modelling assumptions imposed on the trainable models before the training starts. These restrictions can be formalizations of domain knowledge about the problem or the solution. A common example is \emph{Convolutional Neural Networks} (CNN) \cite{CNN1, CNN2}, which assume translation-equivariance. The restrictions can also serve the purpose of making the model easier to interpret (for example by the use of modularity~\cite{Modularity1, Modularity2}), or making it more robust to out-of-distribution data (for example by integrating causal modeling~\cite{Causal1, Causal2}). However, our inductive biases are qualitatively very different, because they are tailored to MEPS rather than ANNs.

\section{(Single-Excitation) Projective Simulation} \label{Section:PS}
PS~\cite{Briegel_Cuevas2012, Mautner_Makmal2015} is a machine learning approach that models the basic process of how a chain of thought emerges as a random walk. The core idea is that each new thought is sampled from a probability distribution conditioned on the current thought. 

To formalize this idea, PS uses a so-called \emph{Episodic and Compositional Memory} (ECM). This ECM is modelled as a weighted, directed graph $G = (V,E,h)$. The vertices $c \in V$ are called \emph{clips} and we assume a labelling $V = \{ c_1, \ldots , c_{|V|} \}$. These clips have semantics attached to them: they might represent memories, elementary concepts, or other forms of thoughts. An example is shown in Figure \ref{Figure:PS}. To model a decision-making process, also called \emph{deliberation}, the agent performs a random walk over $V$. 

\begin{figure}[h!]
    \centering
    \includegraphics[width = 0.6 \linewidth]{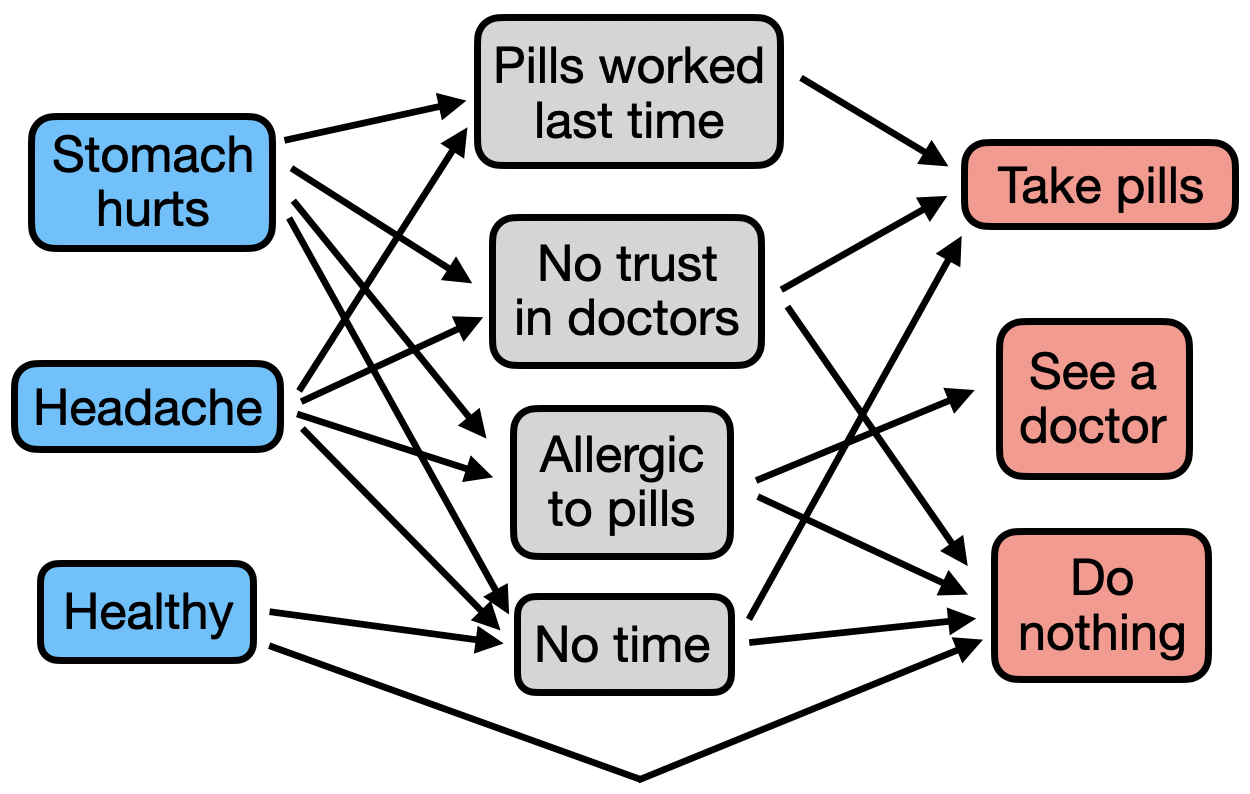}
    \caption{An example for the ECM of a PS agent contemplating how to deal with a small ailment. Observations/percepts are shown in blue and actions in red. Furthermore, there are grey internal clips representing intermediate thoughts that lead to a decision.}
    \label{Figure:PS}
\end{figure}

The edges $e \in E$ represent allowed transitions between clips, and since the ECM is directed, we will often write edges $e = (c_j, c_k)$ as $c_j \rightarrow c_k$. To each edge $e = c_j \rightarrow c_k$ at time step $n$, we assign a weight $h^{(n)}(e) \equiv h^{(n)}(c_j, c_k) \in \mathbb R$ that we call an \emph{h-value}; these serve as the trainable parameters of the agent.

Given a clip $c_j$, to sample the next clip, one considers the transition probability $p^{(n)}(c_k| c_j)$ constructed from all the $h$-values $h^{(n)}(c_j, c_k)$ as defined in \cite{Briegel_Cuevas2012}: 
\begin{align}
    p^{(n)}(c_k | c_j) := \frac{h^{(n)}(c_j, c_k)}{\sum_{m} h^{(n)}(c_j, c_m)}
\end{align}
We denote the above as the \emph{standard probability rule}. Another popular probability assignment, and one which is used heavily in this work, is the use of the softmax function, i.e. replace $h^{(n)}(c_j, c_k)$ with $e^{\beta h^{(n)}(c_j, c_k)}$ for some hyperparameter $\beta \in \mathbb R$. 

PS is usually applied within the \emph{Markov Decision Process} (MDP) setting of reinforcement learning~\cite{RLbook}. This means the agent interacts with an environment, where this interaction consists of discrete time steps that are each comprised of the following parts: at the beginning of the step, the agent obtains an \emph{observation} that it must respond to with an \emph{action} and then obtains a \emph{reward} $R\in \mathbb R$.

During the design of a PS agent, one has to decide how to ``couple in'' observations and ``couple out'' actions. In PS, observations are also called \emph{percepts}. The most popular approach assumes discrete finite observations and assigns a separate \emph{percept clip} to each percept (shown in blue in Fig.~\ref{Figure:PS}). Similarly, each of finitely many actions gets a separate \emph{action clip} in the ECM (shown in red in Fig.~\ref{Figure:PS}).

To train a PS agent in the setting of MDPs, the \emph{standard PS update rule} reinforces the entire deliberation path from percept to action. More specifically, after taking an action and receiving a reward $R^{(n)}$, each edge $c_j \rightarrow c_k$ is updated according to the following rule:
\begin{align}
    h^{(n+1)}(c_j, c_k) &= h^{(n)}(c_j, c_k) - \gamma (h^{(n)}(c_j, c_k) - h_{\mathrm{init}}) \\
    &\qquad \qquad \qquad + R^{(n)} g^{(n)}(c_j, c_k) \nonumber
\end{align}
If the standard probability assignment is used, we clamp the h-values to be no smaller than some base value (a hyper-parameter) $h_{\mathrm{min}} \ge 0$. Furthermore, $\gamma \in [0,1]$ is the \emph{forgetting} hyperparameter that controls how fast an h-value decays back to its initial value $h_{\mathrm{init}}$. This forgetting mechanism mitigates overfitting, acts as a soft regularization, and allows for faster adaptation to shifts in the transition function of the environment. $g^{(n)}(c_j, c_k)$ is the \emph{glow} factor that allows for handling of delayed rewards and is defined via
\begin{align}
    g^{(n)}(c_j, c_k) = \begin{cases}
        1 & \text{ if } c_j \rightarrow c_k \text{ on}\\ & \text{ last path }\\
        (1-\eta) g^{(n-1)}(c_j, c_k) & \text{ else} 
    \end{cases}
\end{align}
and initialized to $0$. The \emph{glow dampening factor} $\eta \in [0,1]$ is a hyperparameter, and plays a role similar to the discount factor in returns and value functions~\cite{OptimalPS}. The standard PS update rule can be interpreted as a form of Hebb's learning rule ``\emph{What fires together wires together}''.

\section{Multi-Excitation PS} \label{Section:MEPS}

While PS models chains of thought as random walks it cannot naturally represent reasoning steps that have a composite structure. For example, the decision to eat in a restaurant might depend both on the financial situation of the agent as well as their appetite. In PS, the current clip has to store all the short-term information the agent considers in the deliberation. Therefore, clips need to carry the semantics of all relevant observables, such as $c = ($hungry, $\ge 100$ USD, no time to cook, good restaurant nearby$)$.  

For the purpose of interpretability, it is important to explicitly represent different observables and degrees of freedom. To make this possible, we first imagine the random walk of PS as an excitation or a particle moving along the ECM. We will use the terms \emph{particle} and \emph{excitation} interchangeably. Now, to be capable of explicitly representing different observables as separate entities, we replace the single excitation with multiple excitations. With this change, it is also possible to have one excitation for each value of each observable.

\begin{figure}[h!]
    \centering
    \includegraphics[width = 0.7 \linewidth]{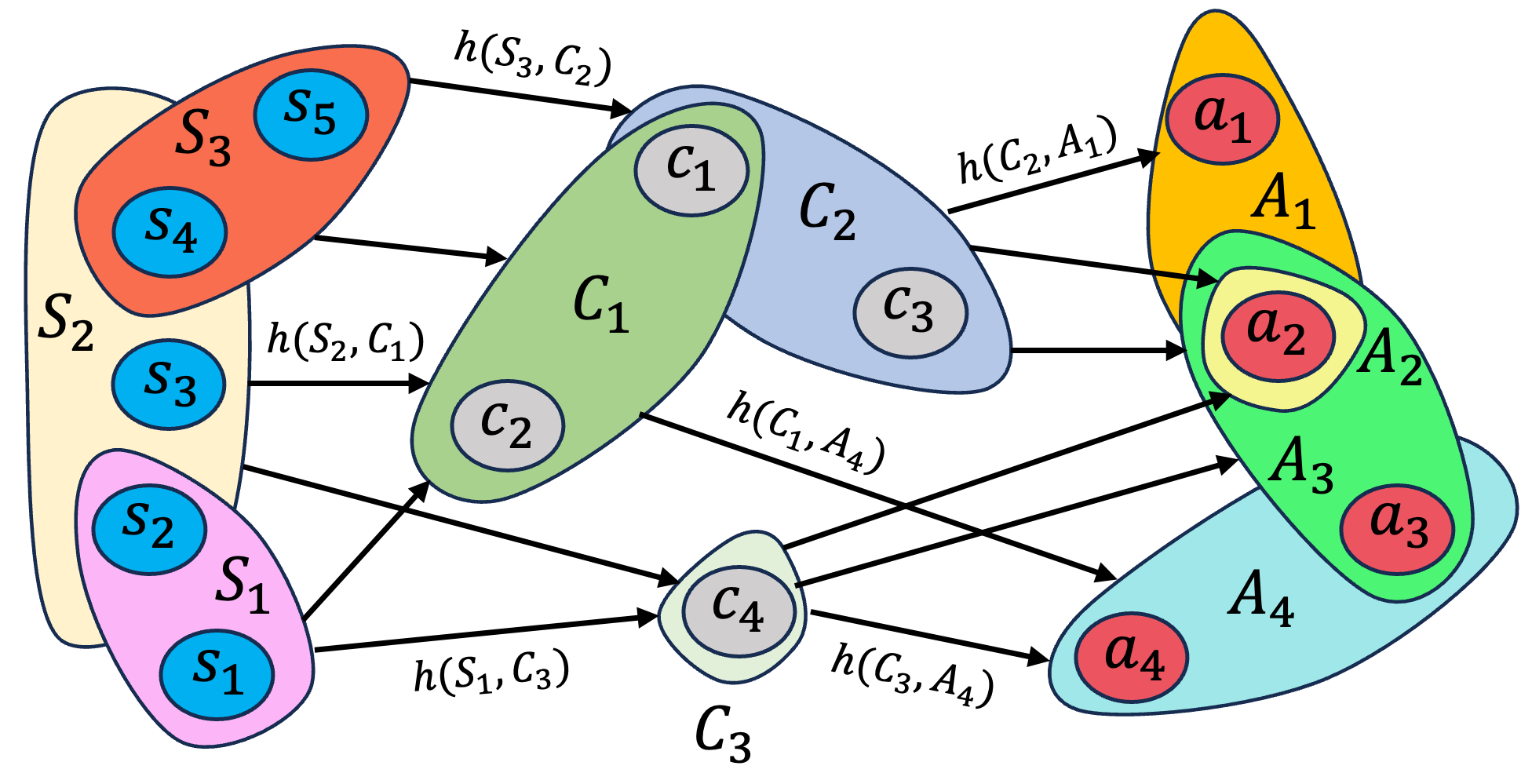}
    \caption{An example of a directed, weighted hypergraph describing the ECM of a typical MEPS agent in a reinforcement learning setting. Atomic percept clips are represented in blue with a lowercase $s$, atomic intermediate clips in grey with a lowercase $c$, and atomic action clips in red with a lowercase $a$; the domains and codomains of hyperedges are labelled with capital letters whose clip type corresponds to their lowercase version. Each hyperedge $e \in E$ also has an h-value $h(e)$ associated with it.}
    \label{Figure:Hypergraph}
\end{figure}

In the dinner example from above, one could have an ECM with $V = \{$full, hungry, $\ge 100$ USD, $< 100$ USD, no time to cook, plenty of time, good restaurant nearby, good restaurants far away$\}$. Now, the current short-term memory of the agent can be described by an \emph{excitation configuration} such as $C_{\mathrm{now}} = \{ $hungry, $\ge 100$ USD, no time to cook, good restaurant nearby$\} \subset V$, graphically represented as putting an excitation on each of the atomic clips in $C_{\mathrm{now}}$. The edges of PS are replaced with objects that move from a current excitation configuration to the next excitation configuration. Mathematically, this can be formalized using hypergraphs~\cite{GraphBook1, GraphBook2, Dai_Gao2023}:

\begin{definition} \label{Definition:Hypergraph}
    A \emph{directed hypergraph} $G = (V, E)$ consists of a finite set $V$ and a set $E \subset \mathcal ( \mathcal P(V)\setminus \{ \varnothing \} ) \times (\mathcal P(V) \setminus \{ \varnothing \})$, with $\mathcal P(V)$ the power set of $V$. The elements of $V$ are referred to as \emph{vertices} or \emph{nodes} while the elements of $E$ are referred to as \emph{hyperedges}. For the sets of vertices  $ V_{\mathrm{in}} \equiv \{v_{j_1}, \ldots , v_{j_D}\}$ and $V_{\mathrm{out}} \equiv \{v_{k_1}, \ldots , v_{k_C}\}$ and hyperedge $e = (V_{\mathrm{in}}, V_{\mathrm{out}}) \in E$, we call $V_{\mathrm{in}}$ the \emph{domain} or \emph{tail} and $V_{\mathrm{out}}$ the \emph{codomain} or \emph{head} of the hyperedge $e \in E$. Hyperedges will also be referred to using the notation $V_{\mathrm{in}} \rightarrow V_{\mathrm{out}}$. 

    A \emph{weighted, directed hypergraph} $G = (V,E,h)$ is a hypergraph $G = (V,E)$ together with a weight function $h: E \rightarrow \mathbb R$.
\end{definition}

\begin{definition} \label{Definition:MEPSarchitecture}
    A \emph{standard Multi-Excitation Projective Simulation} (MEPS) agent is given by a weighted, directed hypergraph $G = (V, E, h)$ that we refer to as the \emph{Episodic and Compositional Memory}(ECM) of the agent (compare Figure \ref{Figure:Hypergraph}). We refer to the elements $c \in V$ as \emph{atomic clips} and use the notation $V = \{c_{1}, \ldots , c_N \}$. Subsets $C \subset V$ will be referred to as \emph{excitation configurations}. Furthermore, we will often use the short-hand notation $c_j \equiv j$, identifying atomic clips with their labels. 
\end{definition} 

\begin{remark}\label{Remark:TimeLable}
    Since the weight function $h$ represents our trainable parameters (specifically, the ordered list of \emph{h-values} $\Big(h(e_1), \dots, h(e_n)\Big)$ for $E=\{e_1,\dots ,e_n\}$), we will often update it. When we need to make clear that we refer to a specific time step $n$, we will use the notation $h^{(n)}$.
\end{remark}

Similarly to PS, we envision MEPS to be used in a reinforcement learning setting. This requires us to make choices about how percepts/observations are represented and about how actions are coupled out. For this purpose, we require that there are some fixed input and output coupling functions that connect the external behaviour of the agent with its internal model:

\begin{definition}
    Let $\mathrm{OUT} \subset P(V)\setminus \{ \varnothing \}$ and $\mathrm{IN} \subset P(V)\setminus \{ \varnothing \}$ denote output and input sets, respectively. In the setting of Markov Decision Processes, a MEPS agent is also equipped with the following two functions: an \emph{input coupling function} $\mathcal I: \text{Observations} \rightarrow \mathrm{IN}$ and an \emph{output coupling function} $\mathcal O: \mathrm{OUT} \rightarrow \text{Actions}$.

    Upon receiving an observation $\mathrm{obs}$, excitations are put on the atomic (percept) clips in $\mathcal I(\mathrm{obs})$ in the agent's ECM, triggering deliberation through the ECM (see Def. \ref{Definition:MEPSdynamics}) until reaching a set of atomic clips $C_{\mathrm{act}}$ contained in $\mathrm{OUT}$. Then, the action $\mathcal O(C_{\mathrm{act}})$ is used by the agent on the environment.
\end{definition}

 
With the previously established structure, we can now explain how a deliberation step of a MEPS agent works:

\begin{definition}\label{Definition:MEPSdynamics}
    Consider a standard MEPS agent and a current excitation configuration $C_{\mathrm{now}} = \{c_{m_1}, \dots, c_{m_x}\} \subset V$ with $m_j < m_k$ for $j< k$. The sampling of the next excitation configuration $C_{\mathrm{next}}$ is referred to as a \emph{random walk step} or \emph{deliberation step}. This step proceeds as follows:

    \begin{enumerate}
        \item Collect a (ordered) list 
        \begin{align*}
        \mathcal H_{\mathrm{relevant}} &= \Big( h(C_{\mathrm{now}}, C_{\mathrm{next}}) \Big| (C_{\mathrm{now}} \rightarrow C_{\mathrm{next}}) \in E \Big) \\ 
        &\equiv \Big( h( C_{\mathrm{now}}, \bullet)\Big).
        \end{align*}
        We refer to this list as the \emph{relevant $h$-values} for the next random walk step.
        
        \item Turn the list $\mathcal H_{\mathrm{relevant}}$ into a list of probabilities, e.g. by applying a softmax function or by using the \emph{standard probabilities} 
        \begin{align*}
        p(C_{\mathrm{next}}| C_{\mathrm{now}}) = \frac{h(C_{\mathrm{now}}, C_{\mathrm{next}})}{\sum_{(C_{\mathrm{now}} \rightarrow C') \in E} h(C_{\mathrm{now}}, C')}.
        \end{align*}

        \item Sample the next excitation configuration $C_{\mathrm{next}}$ using the probabilities from the previous step. 
    \end{enumerate}
    
\end{definition}

For learning, we directly adapt the standard PS update rule to our new concept of $h$-values.

\begin{definition}\label{Definition:MEPSlearning} 
Consider a MEPS agent with ECM $(V,E,h)$.

In addition, consider two further weight functions $g, h_{\mathrm{init}}: E \rightarrow \mathbb R$ for the directed hypergraph $(V,E)$, and two hyperparameters $\gamma \in [0,1]$ and $\eta \in [0,1]$. $h_{\mathrm{init}}$ gives the initialization of the \emph{h-values}, $g$ gives the \emph{glow-factors} or \emph{glows}, $\eta$ is the \emph{glow damping factor}, and $\gamma$ the forgetting factor. Before the first random walk, all glows are initialized to $0$.

Then, the \emph{standard MEPS update rule} proceeds as follows:

\begin{enumerate}
\item At the end of a random walk $ \mathcal R \equiv C_{j_1} \rightarrow \cdots \rightarrow C_{j_m}$ with $C_{j_k} \subset V \ \  \forall k$, for all $(C \rightarrow C') \in E$ the glow is updated according to:
\begin{align*}
    g^{(n)}(C, C') = 
    \begin{cases}
        1 & \text{ if } \exists k \text{ s.t. } C = C_{j_k}\\ & \text{ and } C' = C_{j_{k+1}}\\
        (1-\eta) g^{(n-1)}(C, C') & \text{ else} 
    \end{cases}
\end{align*}

\item The $h$-values for all hyperedges $(C \rightarrow C') \in E$ are then updated using the current reward $R^{(n)}$:
\begin{align*}
    h^{(n+1)}(C, C') = \ & h^{(n)}(C, C') - \gamma \cdot \Big(h^{(n)}(C, C')- h_{\mathrm{init}}\Big)  \\
    & + R^{(n)} g^{(n)}(C, C')
\end{align*}
If the standard probability assignment is used, we clamp the h-values to be no smaller than some hyper-parameter $h_{\mathrm{min}} \ge 0$.
\end{enumerate}

\end{definition}

\subsection*{The Training History of MEPS as a Dynamic Hypergraph}\label{Section:DynamicHypergraphTrainingHistory}

To rigorously formalize the training history of a standard MEPS agent, we propose the following definition for a dynamic hypergraph, which is a generalization of the dynamic graph definition found in \cite{Vehlow_Beck_Weiskopf2016} plus an additional modification:

\begin{definition} \label{Definition:DynamicHypergraph}
Let $T \subset \mathbb R$ be a parameter space with elements $t \in T$. Consider a weighted, directed hypergraph $G = (V, E, h)$ such that the vertex and hyperedge sets can be partitioned as $V = \bigcup_{t \in T} V_t$ and $E = \bigcup_{t \in T} E_t$, with $E_t \subset (\mathcal P(V_t) \setminus {\varnothing}) \times (\mathcal P(V_t) \setminus {\varnothing})$, respectively. A \emph{weighted, directed, dynamic hypergraph} $\mathcal G$ is a collection of sub-hypergraphs $\{G_t\}_{t\in T}$ where each $G_t=(V_t,E_t,h_t)$ is called a leaf of $\mathcal G$. Each leaf has a weight function $h_t: E_t \rightarrow \mathbb R$ which corresponds to a domain restriction of the weight function $h: E\rightarrow \mathbb R$. 

The initialization $h_{\text{init}}$ in particular is the weight function corresponding to the smallest leaf index $\min_{t \in T} t$, assuming a minimal index exists.
\end{definition}

Our definition allows us to explicitly relate all sub-hypergraphs appearing in the set through the identification of each of their weight functions $h_t$ as constant $t$ slices of the weight function $h$. In this way, the weight function $h$ acts as a sort of glue that stitches the sub-hypergraphs together, inducing a flow through the set. This is in stark contrast to the definition in \cite{Vehlow_Beck_Weiskopf2016} or other related definitions in the (hyper)graph visualization literature (to our knowledge) \cite{beck_taxonomy_2017}.

If we endow $h$ with the explicit form given in Definition \ref{Definition:MEPSlearning}, then the entire MEPS algorithm can also be viewed as a hypergraph generation tool, where hypergraphs with specific properties could be obtained after training by tailoring the agent architecture and update rule along with the learning environment. This process would produce a final hypergraph but if one also stores the generated hypergraphs at each training step, then the MEPS algorithm can also generate dynamic hypergraphs, which could subsequently be analyzed using standard (hyper)graph visualization techniques \cite{beck_taxonomy_2017,fischer_towards_2021}. Much infrastructure that is currently underdeveloped in normal PS implementations, such as the single-excitation PS graph surgery rules proposed in \cite{Briegel_Cuevas2012}, would need to first be developed before such a proposal could be fruitfully initiated. However, the inductive bias that will be described later in Section \ref{Section:Bias} may provide a temporary solution to the problem of generating a hypergraph with an arbitrary size, essentially by adaptively restricting the size of the subsets appearing in the edge set $E$ during training.

Because MEPS is an explainable model, the dynamic hypergraph inherits this explainability and one can also visualize how the meaning of the sub-hypergraphs evolves over time. We believe the latter is an interesting application of (hyper)graph visualization to machine-learning training histories and XAI in general \cite{fischer_visual_2021}.

As a technical aside, we require that a partition of the hypergraph $G$ can always be found such that each $h_t$ is well-defined. In many applications, the set of sub-hypergraphs can be interpreted as a time series, so that one can consider a larger hypergraph whose hyperedge and vertex sets are simply the union of all those that appear in the time interval. Then it is straightforward to construct the weighted, directed dynamic hypergraph. This is especially true for MEPS, which Definition \ref{Definition:DynamicHypergraph} was originally constructed for, as each successive sub-hypergraph after the initialization is generated upon applying the update rule in Definition \ref{Definition:MEPSlearning} such that the partitioning is guaranteed.

We also believe Definition \ref{Definition:DynamicHypergraph} will be useful to machine-learning practitioners, specifically those using hypergraph learning methods \cite{yadati_hypergcn_2019} or hypergraph neural networks \cite{feng_hypergraph_2019}, as a way to talk about agent learning/training history.



\section{A Physics-Inspired Inductive Bias} 
\label{Section:Bias}

In this section, we will present our inductive bias that will allow us to significantly reduce the computational complexity of MEPS agents. 
Since the detailed motivation is inspired by quantum many-body physics to formulate classical analogues, we postpone a detailed explanation of the motivation to Section~\ref{Section:Motivation}. Nonetheless, we emphasize that we have written said section in an accessible way which we believe to be digestible also for non-physicists.

However, already here, we provide a basic, conceptual discussion motivating the main decisions for our inductive bias. The main cause for the exponential complexity of the hypergraph is that we assign transition probabilities or $h$-values $h(C_{\mathrm{in}} , C_{\mathrm{out}})$ for full excitation configurations $C_{\mathrm{in}/\mathrm{out}}$, in a hypergraph that can have exponentially many such configurations.

In many-body physics, one also investigates transition probabilities of many excitations. Such transition probabilities can usually be understood as arising from combinations of elementary interactions of only a handful of excitations. In general, these interactions can destroy or create excitations, a typical example being nuclear scattering.

This motivates us to introduce a finite set $IO$ which contains tuples $(i,o)$ of integers. For each $(i,o) \in IO$, there exists an elementary transition which converts $i$ ingoing excitations into $o$ outgoing excitations. Typically in the most well-known and successful physical models (e.g. the Ising model) $i,o \le 4$, so that only a small number of excitations are required to effectively capture a wide range of phenomena. In our classical analogy, we measure the ``strength'' or amplitude of this elementary transition with $h$-values $h_{(i,o)}(C^{(i)}_{\mathrm{in}} , C^{(o)}_{\mathrm{out}})$. This amplitude depends not only on the number of scattering excitations, but also on the precise state of the $i$ ingoing excitations $C^{(i)}_{\mathrm{in}}$ and the $o$ outgoing excitations $C^{(o)}_{\mathrm{out}}$. In our inductive bias, these \emph{many-body $h$-values} $h_{(i,o)}(C^{(i)}_{\mathrm{in}} , C^{(o)}_{\mathrm{out}})$ will be the trainable parameters.

Since these $h_{(i,o)}$ express elementary interactions/transitions, we allow for the presence of other excitations. This means that $C^{(i)}_{\mathrm{in}} \subset C_{\mathrm{in}}$ and $C^{(o)}_{\mathrm{out}} \subset C_{\mathrm{out}}$, i.e. the ingoing excitations and outgoing excitations of an elementary interaction are only subsets of the full excitation configurations now and later. However, physics does not allow arbitary transitions between particles. Conservation laws such as charge conservation and locality restrict the allowed transitions. We use sets $E^{(i,o)} \subset \mathcal P(V)\times \mathcal P(V)$ to define which elementary transitions we allow, i.e. only for $(C^{(i)}_{\mathrm{in}} , C^{(o)}_{\mathrm{out}}) \in E^{(i,o)}$ we assign (trainable) $h_{(i,o)}$.

In our classical analogy, we make the convention that one random walk step corresponds to one elementary transition occurring. Furthermore, as already stated, we measure the strength for each elementary process with the $h_{(i,o)}$. Therefore, to sample the next excitation configuration, we  just make a list of all the $h_{(i,o)}(C_{\mathrm{in}}^{(i)} , C_{\mathrm{out}}^{(o)})$ such that $C_{\mathrm{in}}^{(i)} \subset C_{\mathrm{in}}$, and directly sample a $C_{\mathrm{out}}^{(o)}$ from these $h_{(i,o)}$, using e.g. the standard probability assignment or a softmax. Then, the next full excitation configuration is simply given by $ C_{\mathrm{out}} := C^{(o)}_{\mathrm{out}} \cup ( C_{\mathrm{in}} \setminus C^{(i)}_{\mathrm{out}} )$.


Now, after this exposition, we give the full formal definition of our inductive bias, including its update rule:
\begin{bias}{1} \label{Bias:ManyBody}
    Given a non-empty finite set $V = \{c_1, \ldots c_{|V|}\}$, specify the following objects:
    \begin{enumerate}
    \item A finite set $IO \subset \mathbb N_{>0}^2$, where the elements $(i,o) \in IO$ are the allowed (pairs of) numbers of ingoing and outgoing excitations for which we will introduce \emph{many-body h-values} $h_{(i,o)}$.

    \item For all $(i,o) \in IO$, let $E^{(i,o)}_{\mathrm{all}}$ be the set of all $(C^{(i)}_{\mathrm{in}} , C^{(o)}_{\mathrm{out}}) \in (\mathcal P(V) \setminus \{ \varnothing \}) \times (\mathcal P(V) \setminus \{ \varnothing \})$ with $|C^{(i)}_{\mathrm{in}}| = i$ and $|C^{(o)}_{\mathrm{out}}| = o$, and $C^{(i)}_{\mathrm{in}} \ne C^{(o)}_{\mathrm{out}}$. Here, the last condition serves to rule out transitions that do nothing. Then, specify a subset $E^{(i,o)} \subset E^{(i,o)}_{\mathrm{all}}$ which serves to describe the set of allowed transitions for $(i,o)$. The notation $C^{(i)}_{\mathrm{in}} \rightarrow C^{(o)}_{\mathrm{out}}$ will also be used for $e = (C^{(i)}_{\mathrm{in}}, C^{(o)}_{\mathrm{out}}) \in E^{(i,o)}$.

    \item For each $(i,o) \in IO$, there is a (ordered) list 
    \begin{align}
        & H^{(i,o)} =  \Big\{ h_{(i,o)}\big(\{c_{j_1}, \ldots, c_{j_i} \},  \{c_{k_1}, \ldots, c_{k_o}\} \big)  \\ 
        &\Big| \big(\{c_{j_1}, \ldots, c_{j_i}\} \rightarrow \{c_{k_1}, \ldots, c_{k_o}\} \big) \in E^{(i,o)} \Big\}. \nonumber
    \end{align}
    The $h_{(i,o)}\big(\{c_{j_1}, \ldots, c_{j_i} \},  \{c_{k_1}, \ldots, c_{k_o}\} \big) \in \mathbb R$ are our trainable parameters and are called \emph{many-body h-values}.

    \item For each $(i,o) \in IO$, there is a (ordered) list $H^{(i,o)}_{\mathrm{init}}$ specifying the initialization of each element of $H^{(i,o)}$. Similarly, for each $(i,o)\in IO$, there is a (ordered) list $G^{(i,o)}$ storing the \emph{glow-factors} for all many-body h-values $h_{(i,o)}$.
    \end{enumerate}
    
    Given an excitation configuration $\{c_{m_1}, \ldots c_{m_x}\}$, a \emph{random walk step} or \emph{deliberation step} deciding the next excitation configuration proceeds as follows:

    \begin{enumerate}
        \item Collect a list $\mathcal H_{\mathrm{relevant}}$ of all many-body $h$-values $h_{(i,o)}\big(\{c_{j_1}, \ldots, c_{j_i} \}, \{c_{k_1}, \ldots, c_{k_o}\} \big) \in H^{(i,o)}$ with $(i,o) \in IO$ such that $\big(\{c_{j_1}, \ldots, c_{j_i} \} \rightarrow \{c_{k_1}, \ldots, c_{k_o}\} \big) \in E^{(i,o)}$ and $\{c_{j_1}, \ldots, c_{j_i}\} \subset \{ c_{m_1}, \ldots , c_{m_x}\}$. We refer to those as the \emph{relevant} many-body h-values.

        \item Turn $\mathcal H_{\mathrm{relevant}}$ into a list of probabilities by using the \emph{standard propabilities} (or using the softmax function for example)
        \begin{equation}
        \frac{h}{\sum_{\tilde h \in \mathcal H_{\mathrm{relevant}}} \tilde{h}}\,,
        \end{equation} 
        for each $h \in \mathcal H_{\mathrm{relevant}}$, then sample one transition $(\{c_{j_1}, \ldots, c_{j_i} \} \rightarrow  \{c_{k_1}, \ldots, c_{k_o} \}) \in \bigcup_{(i',o') \in IO} E^{(i',o')}$.

        \item In the original configuration $\{ c_{m_1}, \ldots c_{m_x}\}$, remove all excitations in $\{c_{j_1}, \ldots, c_{j_i} \}$, and put excitations into $\{c_{k_1}, \ldots, c_{k_o}\}$. If $ \big( \{c_{m_1}, \ldots , c_{m_x}\} \setminus \{c_{j_1}, \ldots, c_{j_i} \} \big) \cap \{c_{k_1}, \ldots, c_{k_o}\} \ne \varnothing $, we keep those excitations and discard the second excitations for those atomic clips (see Remark \ref{Remark:Occupied}).
    \end{enumerate}
    
    Consider now the $n$-th step in the episode, and write an explicit time label $^{(n)}$ on the $h$-values and glows. Upon receiving a reward $R^{(n)}$, the \emph{many-body MEPS update rule} updates the many-body h-values $h^{(n)}_{(i,o)} := h^{(n)}_{(i,o)}\big(C^{(i)}_{\mathrm{in}}, C^{(o)}_{\mathrm{out}}\big)$ for all $(i,o) \in IO$ and all $C^{(i)}_{\mathrm{in}} \rightarrow C^{(o)}_{\mathrm{out}} \in E^{(i,o)}$ according to the rule
    \begin{equation}
        h^{(n+1)}_{(i,o)} = h^{(n)}_{(i,o)}-\gamma \cdot (h^{(n)}_{(i,o)}-h^{(0)}_{(i,o)}) + R^{(n)} g^{(n)}_{(i,o)}\,,
    \end{equation}
    where $h^{(0)}_{(i,o)} \in H^{(i,o)}_{\mathrm{init}}$ is the initialization, $R^{(n)}$ is the reward of the current action, and $\gamma \in [0,1]$ is a fixed \emph{forgetting} hyperparameter. $g^{(n)}_{(i,o)} \equiv g^{(n)}_{(i,o)}(C^{(i)}_{\mathrm{in}}, C^{(o)}_{\mathrm{out}}) \in G^{(i,o)}$ is the \emph{glow}-factor, updated after each (full) random walk via
    \begin{align}
        g^{(n)}_{(i,o)} = \begin{cases} 1 &\text{ if } C^{(i)}_{\mathrm{in}} \rightarrow C^{(o)}_{\mathrm{out}} \text{ on the last path}\\
        (1-\eta) g^{(n-1)}_{(i,o)}, &\text{ else.}\end{cases} 
    \end{align}
    where $\eta \in [0,1]$ is the \emph{glow damping} hyperparameter. All glows are initialized to $0$.
    
    If the standard probability assignment is utilized, we clamp the many-body h-values after updates to be larger than some hyperparameter $h_{\mathrm{min}} \ge 0$.
\end{bias}

\begin{remark}
\label{Remark:Occupied}
It can happen that sampled transitions put excitations into atomic clips that are already occupied. In Inductive Bias \ref{Bias:ManyBody}, we made the choice that the atomic clip simply stays excited, i.e. it continues to carry exactly one excitation, effectively discarding the second excitation. 

We made this choice because we associate atomic clips with concepts or beliefs, and the excitation tells us whether the concept represented by the atomic clip is currently relevant.

However, as we will explain in more detail in Section \ref{Section:Quantum}, this choice cannot naturally be linked to the behavior of any elementary particle. In fact, it is an irreversible process: the excitation that jumps on an already excited atomic clip cannot jump back and is thereby annihilated.
\end{remark}

\begin{definition}
    The weighted, directed hypergraph obtained by using $E^{\mathrm{many-body}} := \bigcup_{(i,o) \in IO} E^{(i,o)}$ as the set of hyperedges and the many-body h-values $h_{(i,o)}$ as weights is called the \emph{many-body hypergraph}. 
\end{definition}

Now, we have two hypergraphs, the ECM and the many-body hypergraph. Similarly, we have the standard $h$-values $h$ and the many-body $h$-values $h_{(i,o)}$. While conceptually related, it is not obvious that the definitions in Inductive Bias \ref{Bias:ManyBody} are compatible with Definition \ref{Definition:MEPSarchitecture}. In \ref{Appendix:RelationHypergraphs}, we show that the definitions are indeed compatible during inference when using the standard probability assignment, by showing how to construct the standard $h$-values $h$ from the many-body $h$-values $h_{(i,o)}$. 

Furthermore, it is important to emphasize that $h$ and $h_{(i,o)}$ are only equivalent for inference, NOT during learning. Updating $h_{(i,o)}$ will also affect $h$ for transitions that did not occur in the random walk. However, for the rest of the paper, it is enough to only work with the $h_{(i,o)}$. 

In many scenarios, it will be natural to consider a layered ECM in which excitations move from layer to layer, similarly to feed-forward artificial neural networks:
\begin{definition} \label{Definition:Layers}
    A \emph{weighted, layered hypergraph} is a weighted, directed hypergraph $G = (V, E, h)$ together with a partition $L = (L_1,\ldots , L_D)$ of $V$ (i.e. $L_j \cap L_k = \varnothing \ \forall j\ne k$ and $L_j \ne \varnothing \ \forall j$ and $\bigcup_{j=1}^D L_j = V$). The $L_j$ are referred to as \emph{layers}, and $D$ is the \emph{depth} of the hypergraph.

    A weighted, layered hypergraph is called \emph{feed-forward} if for all directed hyperedges $\{c_{j_1}, \dots, c_{j_i}\} \rightarrow \{ c_{k_1} ,\dots c_{k_o} \} \in E$, there is an $\ell \in \{1,2,\dots, D-1\}$ such that $\{c_{j_1}, \dots, c_{j_i}\} \subset L_\ell$ and $\{ c_{k_1} ,\dots , c_{k_o} \} \subset L_{\ell + 1}$.
\end{definition}

Now, we integrate this notion of feed-forward, weighted, layered hypergraphs into our many-body physics-inspired Inductive Bias \ref{Bias:ManyBody}:

\begin{bias}{2FF}\label{Bias:Layered}
    For weighted, layered feed-forward many-body hypergraphs with layers $(L_1,\dots , L_D)$, we introduce the following modification of Inductive Bias~\ref{Bias:ManyBody}:

	The \emph{FeedForward} (FF) Inductive Bias is the same as Inductive Bias \ref{Bias:ManyBody}, except for the following restriction:
 
    The $h_{(i,o)}(\{ c_{j_1}, \dots, c_{j_i}\}, \{c_{k_1}, \dots , c_{k_o}\})$ have to satisfy the feed-forward condition, i.e. there is an $\ell \in \{1,2,\dots, D-1\}$ such that $\{c_{j_1}, \dots, c_{j_i}\} \subset L_\ell$ and $\{ c_{k_1} ,\dots , c_{k_o} \} \subset L_{\ell + 1}$.
	
	Furthermore, we require that all random walks couple out an action if all excitations are in layer $L_D$, or earlier.
\end{bias}

We are referring to this Inductive Bias with the label 2FF because in ~\ref{Appendix:OtherInductiveBiases} we will introduce some small modifications of this inductive bias called 2SF and 2DP. These modifications guarantee a polynomial upper bound on the random walk path lengths. However, in the main text, we will focus on Inductive Bias~\ref{Bias:Layered} because the worst-case paths are rarely encountered in practice.

Also for this inductive bias, we will see in ~\ref{Appendix:RelationHypergraphs} that it is compatible with the standard MEPS agent in Definition \ref{Definition:MEPSarchitecture} during inference with the standard probability assignment.

Now that we have formulated our Inductive Bias and its main variant for layered ECMs, we provide an example that discusses how these inductive biases get applied.

\begin{figure}[h!]
\centering
\includegraphics[width = 0.6\textwidth]{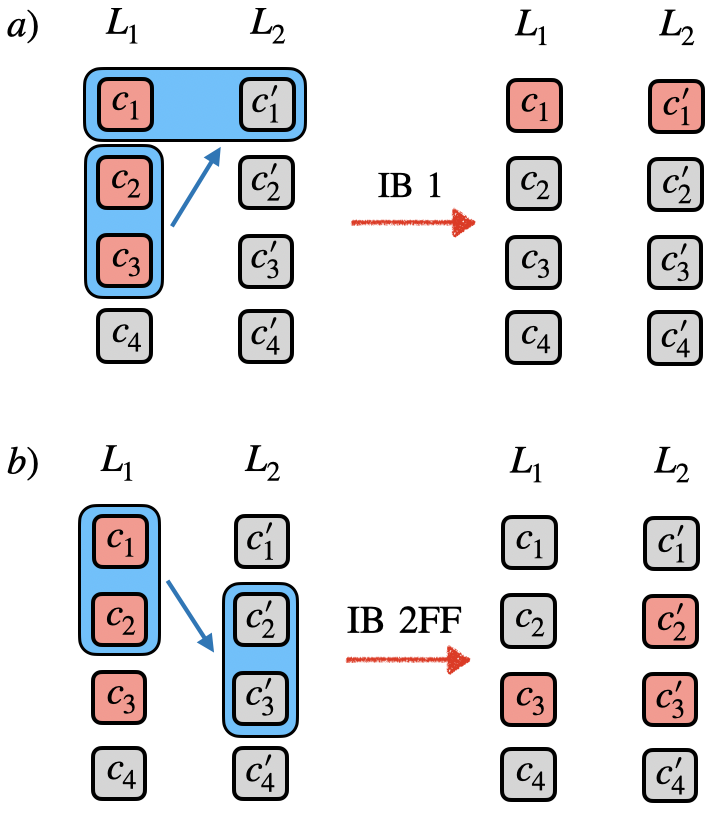}
\caption{An example illustrating random walk steps under different inductive biases, compare with Example \ref{Example:InductiveBiases}. Excited atomic clips are shown in red. The sampled hyperedge is shown in blue. Subfigure a) shows a deliberation step which is only allowed under Inductive Bias \ref{Bias:ManyBody}, because its codomain is in two layers. Also, it shows that an excitation moving into an occupied atomic clip gets discarded. Subfigure b) shows a typical transition under Inductive Bias \ref{Bias:Layered}. }

\label{Figure:InductiveBias2FF}    
\end{figure}

\begin{example} \label{Example:InductiveBiases}

Consider a simple 2-layer setting, with 4 atomic clips in each layer, see Figure \ref{Figure:InductiveBias2FF}: $V=L_1 \cup L_2$, with $L_1 = \{c_1 , c_2, c_3, c_4\}$ and $L_2 = \{c'_1, c'_2, c'_3, c'_4\}$. We only consider h-values with the same number of incoming and outgoing excitation numbers, and let no more than two excitations interact. That means $IO = \{(1,1), (2,2)\}$. Our current excitation configuration is $\{ c_1, c_2, c_3 \}$, meaning that we currently have an excitation in each of the atomic clips $c_1$, $c_2$, and $c_3$. 

With the weakest of the inductive biases, i.e. Inductive Bias \ref{Bias:ManyBody}, and choosing $E^{(i,o)} = E_{\mathrm{all}}^{(i,o)}$, our list $\mathcal H_{\mathrm{relevant}}$ of currently relevant h-values is:
\begin{enumerate}
    \item $h_{(2,2)}(\{c_m, c_n\}, \{c'_j, c'_k\})$ such that $j,k \in \{1,2,3,4\}$, $j < k$ and $m,n \in \{1,2,3\}$, $m < n$ 
    \item $h_{(2,2)}(\{c_m, c_n\}, \{c_j, c_k\})$ such that $j,k \in \{1,2,3,4\}$, $j < k$ and $m,n \in \{1,2,3\}$, $m < n$, and $\{j,k\} \ne \{m,n \}$
    \item $h_{(2,2)}(\{c_m, c_n\}, \{c_j, c'_k\})$ such that $j,k \in \{1,2,3,4\}$ and $m,n \in \{1,2,3\}$, $m < n$
    \item $h_{(1,1)}(c_m, c'_j)$ such that $j \in \{1,2,3,4\}$ and $m \in \{1,2,3\}$
    \item $h_{(1,1)}(c_m, c_j)$ such that $j \in \{1,2,3,4\}$, and $m \in \{1,2,3\}$, and $j \ne m$
\end{enumerate}
This list gets turned into probabilities, in our example by applying the softmax-function to the full list. Say, we sample $h_{(2,2)}(\{c_2, c_3\}, \{c_1, c'_1\})$ and apply it to our current configuration $\{c_1, c_2, c_3\}$. First, we remove the excitations in $c_2$ and $c_3$, giving us the configuration $\{c_1\}$. Next, we put excitations into $c_1$ and $c'_1$. However, $c_1$ already carries an excitation. We just keep this excitation as it is. So our next excitation configuration is $\{c_1, c'_1\}$. Note that our rule for dealing with already occupied atomic clips led to a reduction in the total number of excitations.

Our layered Inductive Bias \ref{Bias:Layered} differs from the previous situation in that the relevant many-body h-values are only items 1 and 4 from the numbered list above. Now, say that we sampled $h_{(2,2)}(\{c_1, c_2\}, \{c'_2, c'_3 \})$ and apply it to our current configuration $\{c_1, c_2, c_3\}$. First, we remove the excitations in $c_1$ and $c_2$, giving us the configuration $\{c_3\}$. Next, we insert excitations in $c'_2, c'_3$, giving us the full next excitation configuration $\{c'_2, c'_3, c_3\}$. We observe that while the feed-forward condition forces all excitations that move to move one layer forward, it allows excitations to stay behind in their old atomic clip in the old layer. Consider now an additional layer $L_3$. Inductive Bias \ref{Bias:Layered} allows us to continue with any transition $C_{\mathrm{in}} \rightarrow C_{\mathrm{out}}$ that has $C_{\mathrm{in}} \subset \{c'_2, c'_3\}$ or $C_{\mathrm{in}} = \{c_3\}$. 
\end{example}

Physically, this layered structure corresponds to situations encountered, for example, in integrated photonics chips performing quantum computation with several photons: The photons move forward in the lateral direction, but perform a quantum walk in the transversal direction~\cite{Flamini_Krumm2023, Photonics}. A common noise source of such chips is that photons get absorbed by the environment.

To quantify the resource advantages provided by our inductive biases, we first consider the costs associated with an unrestricted MEPS agent, which can be thought of as a standard PS agent by way of mapping the graph-based ECM to the hypergraph ECM defined in this work. For that purpose, we first make the following definition:

\begin{definition}
    A standard MEPS agent with ECM $(V, E, h)$ is called \emph{unrestricted} if all mathematically well-defined hyperedges are in $E$, i.e. if $E =  \big(\mathcal P(V)\setminus \{ \varnothing \} \big) \times \big(\mathcal P(V)\setminus \{ \varnothing \}\big)$.
\end{definition}

From this definition, one can quickly see that unrestricted MEPS agents have several costs associated to them that scale (at least) exponentially in the number of atomic clips $|V|$.

\begin{proposition}
    Consider an unrestricted MEPS agent with ECM $(V, E, h)$. Then:
    \begin{enumerate}
        \item[(a)] The number of trainable parameters is $(2^{|V|}-1)^2$. Therefore, the memory cost is also $\Omega (2^{2|V|})$.
        \item[(b)] At each deliberation/random-walk step, there are $2^{|V|}-1$ relevant h-values. In particular, at each deliberation/random-walk step, one must sample from a probability distribution with $2^{|V|}-1$ outcomes. 
    \end{enumerate}
\end{proposition}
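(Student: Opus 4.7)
The plan is to observe that both claims reduce to elementary cardinality computations for the power set of $V$, once the definition of an unrestricted MEPS agent is unpacked. Since there is no structural subtlety, the proof proposal is essentially a counting argument, and I do not expect any real obstacle; the only thing to keep straight is the exclusion of the empty set in both the domain and codomain of hyperedges.

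For part (a), I would first note that by definition of \emph{unrestricted}, the hyperedge set satisfies $E = (\mathcal{P}(V)\setminus\{\varnothing\}) \times (\mathcal{P}(V)\setminus\{\varnothing\})$. Since $|\mathcal{P}(V)| = 2^{|V|}$, we have $|\mathcal{P}(V)\setminus\{\varnothing\}| = 2^{|V|}-1$, hence $|E| = (2^{|V|}-1)^2$. By Definition~\ref{Definition:MEPSarchitecture} and Remark~\ref{Remark:TimeLable}, the trainable parameters of a standard MEPS agent are precisely the $h$-values $h(e)$, one per hyperedge $e \in E$; hence the number of trainable parameters equals $|E| = (2^{|V|}-1)^2$. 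For the memory cost bound, store at least one unit of memory per trainable parameter (and, if one wishes to be pedantic, also per glow and per initialization value from Definition~\ref{Definition:MEPSlearning}); this immediately yields a lower bound of $(2^{|V|}-1)^2 = \Omega(2^{2|V|})$.

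For part (b), I would fix an arbitrary current excitation configuration $C_{\mathrm{now}} \in \mathcal{P}(V)\setminus\{\varnothing\}$ and recall from Definition~\ref{Definition:MEPSdynamics} that the list of relevant $h$-values is
\begin{equation*}
\mathcal{H}_{\mathrm{relevant}} = \bigl( h(C_{\mathrm{now}}, C_{\mathrm{next}}) \,\bigm|\, (C_{\mathrm{now}} \rightarrow C_{\mathrm{next}}) \in E \bigr).
\end{equation*}
Because the agent is unrestricted, for every $C_{\mathrm{next}} \in \mathcal{P}(V)\setminus\{\varnothing\}$ the hyperedge $(C_{\mathrm{now}} \rightarrow C_{\mathrm{next}})$ lies in $E$, and conversely every hyperedge out of $C_{\mathrm{now}}$ has a codomain in $\mathcal{P}(V)\setminus\{\varnothing\}$. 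Hence the map $C_{\mathrm{next}} \mapsto h(C_{\mathrm{now}}, C_{\mathrm{next}})$ is a bijection between $\mathcal{P}(V)\setminus\{\varnothing\}$ and $\mathcal{H}_{\mathrm{relevant}}$, giving $|\mathcal{H}_{\mathrm{relevant}}| = 2^{|V|}-1$. Step~2 of Definition~\ref{Definition:MEPSdynamics} then turns this list into a probability vector of the same length, so any sampling procedure (standard probability rule or softmax) must sample from a distribution with $2^{|V|}-1$ outcomes, completing the proof.
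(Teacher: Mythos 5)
Your proof is correct and follows essentially the same counting argument as the paper: part (a) from $|E| = |\mathcal{P}(V)\setminus\{\varnothing\}|^2 = (2^{|V|}-1)^2$, and part (b) from the fact that every nonempty $C_{\mathrm{next}}$ contributes a distinct relevant $h$-value. The extra detail you supply (the explicit bijection and the remark on the empty-set exclusion) is fine but adds nothing beyond the paper's two-line argument.
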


\begin{proof}
    (a) follows from the statement $E = \big(\mathcal P(V)\setminus \{ \varnothing \} \big) \times \big(\mathcal P(V)\setminus \{ \varnothing \} \big)$, with $|\mathcal P(V)| = 2^{|V|}$ and $|A\times B| = |A| \times |B|$ for sets $A,B$. 

    (b) follows from the fact that for all $C_{\mathrm{in}} \in \mathcal P(V) \setminus \{ \varnothing \}$, each $C_{\mathrm{out}} \in \mathcal P(V) \setminus \{ \varnothing \}$ gives a relevant and separate h-value $h( C_{\mathrm{out}} | C_{\mathrm{in}})$. 
\end{proof}

These severe scaling costs make it very clear that inductive biases restricting the set of hyperedges or relevant h-values are crucial. 

We now analyze the costs of our Inductive Biases:

\begin{proposition} \label{Proposition:Bias1}
    Consider a MEPS agent obeying Inductive Bias \ref{Bias:ManyBody} or \ref{Bias:Layered}. Define $\max I := \max \{ i \ | \ \exists o: (i,o) \in IO \}$ and $\max O := \max \{ o \ | \ \exists i: (i,o) \in IO \}$, as well as $\max IO := \max \{ i + o \ | \ (i,o) \in IO \}$. Then
    the number of trainable parameters is $\mathcal O ( \max I \cdot \max O \cdot |V|^{\max IO})$.
\end{proposition}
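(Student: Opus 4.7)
The plan is to bound the total number of many-body $h$-values by summing $|E^{(i,o)}|$ over all $(i,o) \in IO$, using the crude but sufficient bound $|E^{(i,o)}| \le |E^{(i,o)}_{\mathrm{all}}|$ and counting subsets of $V$ of a fixed size.

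First, I would fix an arbitrary pair $(i,o) \in IO$ and count $|E^{(i,o)}_{\mathrm{all}}|$. By definition, $E^{(i,o)}_{\mathrm{all}}$ consists of pairs $(C_{\mathrm{in}}^{(i)}, C_{\mathrm{out}}^{(o)})$ with $|C_{\mathrm{in}}^{(i)}| = i$ and $|C_{\mathrm{out}}^{(o)}| = o$ (dropping the distinctness condition can only enlarge the count, which is fine for an upper bound). The number of $i$-element subsets of $V$ is $\binom{|V|}{i} \le |V|^i$, and similarly for $o$-element subsets, giving
\begin{equation}
|E^{(i,o)}| \le |E^{(i,o)}_{\mathrm{all}}| \le \binom{|V|}{i} \binom{|V|}{o} \le |V|^{i+o} \le |V|^{\max IO}.
\end{equation}
Since Inductive Bias \ref{Bias:ManyBody} assigns exactly one trainable $h_{(i,o)}$ to each element of $E^{(i,o)}$, the count of trainable parameters contributed by the pair $(i,o)$ is $|E^{(i,o)}| \le |V|^{\max IO}$.

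Next, I would sum over $(i,o) \in IO$. By definition of $\max I$ and $\max O$, we have $IO \subset \{1,\dots,\max I\} \times \{1,\dots,\max O\}$, so $|IO| \le \max I \cdot \max O$. Therefore the total number of trainable parameters is
\begin{equation}
\sum_{(i,o) \in IO} |E^{(i,o)}| \le |IO| \cdot |V|^{\max IO} \le \max I \cdot \max O \cdot |V|^{\max IO},
\end{equation}
which is exactly the claimed $\mathcal O(\max I \cdot \max O \cdot |V|^{\max IO})$ scaling. For Inductive Bias \ref{Bias:Layered}, the feed-forward restriction only removes hyperedges from each $E^{(i,o)}$, so the same upper bound applies without modification.

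The argument is essentially a direct counting exercise and I do not expect a real obstacle; the only thing to be careful about is not to attempt a sharper asymptotic using binomial coefficients (which would complicate the statement) and to note explicitly that the feed-forward condition in Inductive Bias \ref{Bias:Layered} only shrinks the hyperedge set, so the bound transfers verbatim from the proof for Inductive Bias \ref{Bias:ManyBody}.
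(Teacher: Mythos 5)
Your proposal is correct and follows essentially the same route as the paper's own proof: bounding $|E^{(i,o)}| \le \binom{|V|}{i}\binom{|V|}{o} \le |V|^{i+o} \le |V|^{\max IO}$ for each pair, using $|IO| \le \max I \cdot \max O$, and noting that the feed-forward condition of Inductive Bias 2FF only shrinks the hyperedge sets. No gaps.
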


\begin{proof}
    First, we note that $|IO| \le \max I \cdot \max O$. For each $(i,o)$, let us bound the number of $C_I, C_O \in \mathcal P(V)$ for the many-body h-values $h_{(i,o)}(C_I, C_O)$. Using binomial coefficients and Inductive Bias \ref{Bias:ManyBody}, this number is upper-bounded by $\begin{pmatrix} |V| \\ i \end{pmatrix} \cdot \begin{pmatrix} |V| \\ o \end{pmatrix} \le |V|^i |V|^o = |V|^{i+o} \le |V|^{\max IO}$. So, the number of many-body h-values for each $(i,o)$ is upper-bounded by $|V|^{\max IO}$. Since we have at most $\max I \cdot \max O$ choices for $(i,o)$, the total number of h-values is upper bounded by $\max I \cdot \max O \cdot |V|^{\max IO}$. Inductive Bias \ref{Bias:Layered} has even fewer many-body h-values than Inductive Bias \ref{Bias:ManyBody} alone would allow.
\end{proof}

\begin{remark}
    While Proposition \ref{Proposition:Bias1} bounds the number of many-body h-values, it leaves open the possibility that it is computationally expensive to determine which many-body h-values $h_{(i,o)} (C_{\mathrm{in}}, C_{\mathrm{out}})$ are relevant. However, that is not the case: given a configuration $\{c_{m_1}, \ldots , c_{m_x}\}$ (labelled such that $m_1 < \dots < m_x$) and any $(i,o) \in IO$, one just lists all the $\begin{pmatrix} x \\ i\end{pmatrix}$ subsets of $\{c_{m_1}, \dots , c_{m_x}\}$ that have cardinality $i$, and all $\begin{pmatrix} |V| \\ o \end{pmatrix}$ subsets of $V$ that have length $o$, and discards those not in $E^{(i,o)}$. This can be done by using an ansatz $C_{\mathrm{in}} = \{ c_{j_1}, \dots, c_{j_i}\}$ and using a for-loop that has $j_1, \dots j_i$ all run over $m_1, \dots, m_x$, with the extra condition $j_1 < \dots < j_i$. The number of for-loop iterations is clearly upper bounded by $x^i \le x^{\max I} \le |V|^{\max I}$. Similarly for the sets $C_{\mathrm{out}}$, we can use a for-loop with no more than $|V|^{\max O}$ iterations. We do not formulate this observation as a formal proposition because we do not wish to obfuscate the simple argument by getting too specific about the computational model used for resource counting.
\end{remark}

While our inductive biases reduce the number of trainable parameters and relevant transitions from exponential scaling to a polynomial scaling in $|V|$, the exponent of this polynomial scaling is determined by the interaction cutoff $\max IO$. One might wonder whether generically, $\max IO$ should be chosen as a function of $|V|$. Considering thought processes of humans in typical, everyday situations, it seems likely that there exist low values of $\max IO$ that should be successful on a large variety of problems (say, $\max IO \approx 10$). Humans are very successful at adapting to a large variety of domains. Despite this success, most humans can only combine a handful of facts simultaneously into one thought. 

While Inductive Bias~\ref{Bias:Layered} is naturally adapted to layered ECMs and only has an amount of trainable parameters polynomial in the number of atomic clips, it still allows for some pathological many-body hypergraphs for which there exist random walk paths that have a length exponential in the number of layers. Specifically, in Proposition \ref{Proposition:DeepToShallowIsTight} in ~\ref{Appendix:Complexity} we consider many-body hypergraphs with $o > i = 1$. These can start an ``avalanche'' of new excitations  every time one tries to move one excitation forward. If one removes these excitations in the worst order (from deep to shallow), we prove that these random walk paths have an exponential length.

While we do not expect these worst-case avalanche paths to matter in practice (an agent which indicates that almost all possible concepts matter is not very helpful), we still propose modifications of Inductive Bias~\ref{Bias:Layered} which explicitly make such paths impossible. 

Specifically, in ~\ref{Appendix:OtherInductiveBiases}, we introduce Inductive Bias 2SF and 2DP. Inductive Bias 2SF differs from 2FF by demanding that the shallowest excitations get removed first. In ~\ref{Appendix:Complexity}, we prove that Inductive Bias 2SF has a maximal random walk length of $\mathcal O( D \cdot \max_j{|L_j|} )$, which is linear in both the width and depth of the ECM. 

Such agents can be interpreted as forgetting the oldest facts/concepts/atomic clips first. If one wishes such old atomic clips (or rather, their semantics) to persist, one needs to model $E^{(i,o)}$ such that it allows one to copy-paste these atomic clips into deeper layers. 

Inductive Bias 2DP is the harshest one, it discards all passive excitations which did not contribute to a sampled transition. This is an agent which only keeps in mind atomic clips which are immediately relevant, but as a tradeoff its random walk lengths are upper-bounded by $\mathcal O(D)$, as we also prove in ~\ref{Appendix:Complexity}.

However, we consider exponential excitation avalanches to be pathological as they are not helpful for interpretability, and therefore restrict our attention in the main text to Inductive Bias~\ref{Bias:Layered}, which is the most flexible.

\section{Learning Scenarios} \label{Section:Numerics} 

In this section, we apply our methods numerically to three synthetic environments. The code can be found in our GitHub repository~\cite{GitHub}. The first environment is a small toy environment that allows us to understand the basic numerical properties of MEPS agents with different many-body inductive biases in a controlled setting. The second environment is an extension of the first with more actions and a mechanism for deception. Furthermore, its reward contains a contribution measuring the success of an attempted deception. The third environment is used to demonstrate chain-of-thought explanations in multi-layered MEPS agents operating in a real-world-inspired setting. It models a coarse-grained scenario for diagnosing broken computers. In each of the environments the unrestricted MEPS agent corresponds to a standard PS agent, which will serve as a baseline comparison to MEPS along with a standard 2-layer tabular Q-learning agent and a newly designed tabular PS-inspired multi-layer Q-learning agent for further benchmarking. Further details about the tabular PS-inspired multi-layer Q-learning agent will come in Subsection \ref{Subsection:ComputerMaintenance} . We choose the Q-learning algorithm because we believe it is the fairest and most straightforward comparison with our model in terms of explainability and performance, also noting that Q-learning is not the only method that MEPS is competitive with. 

\subsection{Invasion Game With Distraction} \label{Subsection:InvasionGameWithDistraction}

\begin{figure}[h]
    \centering
    \includegraphics[width = 0.7 \linewidth]{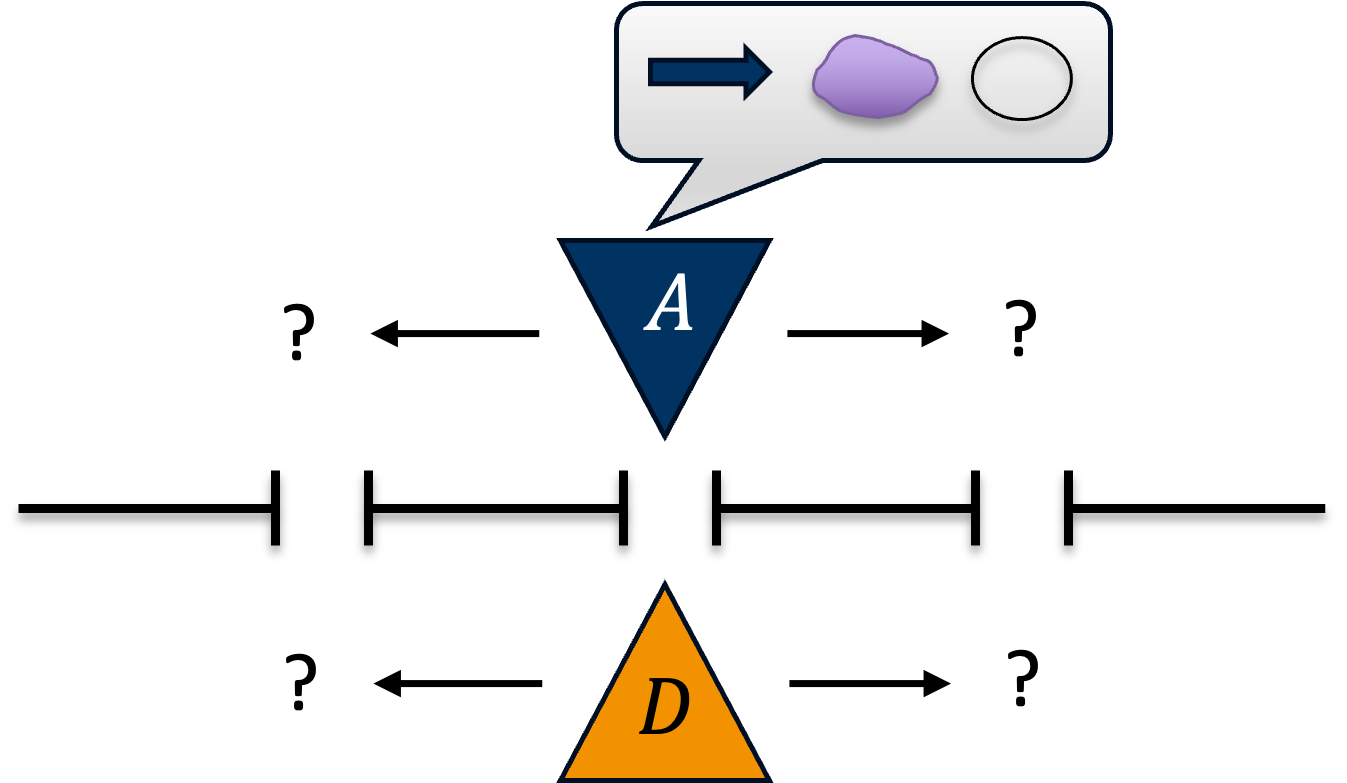}
    \caption{The defender $D$ must guess which door the attacker $A$ will go to based on a set of symbols shown to them, and block $A$. $D$ is rewarded for success and punished for failure.}
    \label{Figure:InvasionDiagram}
\end{figure}

The \emph{Invasion Game} is a standard toy environment~\cite{Briegel_Cuevas2012} to visualize the basic concepts of PS. Mathematically, it is a special case of so-called \emph{contextual bandits} problems~\cite{Bandits1, Bandits2}. The original environment considered a sequence of doors that an attacker would try to enter through, and which a defender (the agent) would attempt to block for a set number of rounds. During each round, the attacker would indicate to the defender via some abstract symbols which door they intend to visit, and the defender would receive a reward based on whether they guessed the correct door or not (as visualized in Figure \ref{Figure:InvasionDiagram}). Thus, the task of the defender is to infer the meaning of said symbols by learning the attacker's strategy. Here, we modify the Invasion Game such that it provides a simplistic environment showcasing the impact of different choices of few-body inductive biases. 

The different few-body inductive biases we will consider play a large role in determining the complexity of the agent, illustrating the general idea behind inductive biases: adapting the agent's biases and complexity to suit the task environment. For this purpose, our goal is to construct an environment that cannot be solved through consideration of only one excitation at a time, but can be solved perfectly by looking at two excitations and is ``overcomplicated'' when looking at three excitations simultaneously.

An environment achieving this goal is constructed as follows: Each round, the agent obtains a percept/observation of the form $(v_1, v_2, v_3) \in \{0,1,2\dots , 9\}^{\times 3}$. Here, each entry $v_j$ corresponds to a value of an observable $j$. For each observable $j$ and each value $v_j$, we associate an atomic clip denoted ``$\mathrm{obs_j : v_j}$'' in the percept layer of our two-layer agents. Therefore, each percept is coupled in by putting three excitations into the corresponding atomic clips in the percept layer. As an action, the agent has to pick exactly one of two doors. These two choices are represented by atomic clips ``$a = 0$'' and ``$a = 1$'' in the action layer of the agent. The action gets coupled out as soon as there is an excitation on one of the atomic action clips. 

For the hyperedges, we built in the domain knowledge that allowed actions pick exactly one door. Because of our decision to directly couple out actions as soon as there is an excitation in the action layer, we restrict the allowed hyperedges to those that have their tail in the percept layer and their head in the action layer. We do not allow transitions within the percept layer, because these would have the interpretation that the observables had suddenly changed.  With these choices, a standard MEPS agent without further restrictions is equivalent to the $(3,1)$-agent from the few-body inductive bias agents that we specify now: 

For $i \in \{1,2,3\}$, we consider two-layered (percept+action layer) agents with many-body inductive bias using $IO = \{(i, 1)\}$, and use percept and action layers as described above. This allows us to directly focus on the difference caused by different choices of $i$. 

To keep the comparison of the cases for different $i$ as clean and simple as possible, we sample the percepts $(v_1,v_2,v_3)$ uniformly i.i.d., meaning that we do not need the forgetting and glow mechanisms in the update rule (this corresponds to $\gamma = 0$ and $\eta = 1$, respectively). This reduces the update rule to $h^{(n+1)}_{(i,o)} = h^{(n)}_{(i,o)} + r$, with $r$ the reward. For each percept $(v_1,v_2,v_3)$, there is exactly one right action $a$. This action depends non-trivially on both of the first two observables. We pick the right action to be $a = v_1 + v_2 \ \mathrm{mod} \ 2$. The value of the third observable, $v_3$, is just a useless distraction. 

The $(1,1)$-agent has many-body h-values of the form $h^{(n)}_{(1,1)}(\{obs_j : v_j\}, a)$. Since it can consider only one observable per decision-making process, it cannot learn to deterministically map the values of the first two observables to the right action. There are $2\cdot 3\cdot 10 = 60$ many-body $h$-values (i.e. trainable parameters) for this agent.

The many-body $h$-values of the $(2,1)$-agent are of the form $h^{(n)}_{(2,1)}(\{obs_j : v_j, obs_k : v_k\}, a)$ for all $j < k$. There are $2 \cdot \begin{pmatrix} 3 \\ 2 \end{pmatrix} \cdot 10 \cdot 10 = 600$ many-body $h$-values/trainable parameters for this agent. This agent has exactly the right inductive bias because its many-body $h$-values $h^{(n)}_{(2,1)}(\{obs_1 : v_1, obs_2 : v_2\}, a)$ exactly encode the information needed for the right action.

The $(3,1)$-agent, which represents a standard PS agent here, has many-body $h$-values of the form $h^{(n)}_{(3,1)}(\{obs_1: v_1, obs_2: v_2, obs_3: v_3\}, a)$. These are $2\times 10^3 = 2000$ trainable parameters, significantly more than for the $(2,1)$-agent. Its many-body $h$-values distinguish between different values of the distraction $v_3$, so it is reasonable to expect that this agent also trains slower. 

All MEPS agents use the softmax function with $\beta = 1.0$ to convert $h$-values to probabilities, and all $h$-values are initialized to $1.0$. The standard tabular 2-layer Q-learning agent has a discount factor of  $\lambda = 0$ and a learning rate of $\alpha = 1$ to match the analogous settings used in the MEPS agents. 

After each action, the agent obtains a reward of $+1$ for a right answer and a harsh negative reward of $-10$ for a wrong answer. For the $(2,1)$- and $(3,1)$- agents, this practically prevents the transition from being sampled again. This allows us to map the advantage of the $(2,1)$-agent concerning the number of trainable parameters to an advantage in training time over the $(3,1)$-agent. This mechanism does not apply to the $(1,1)$-agent, since it has no transitions that can deterministically choose the right action.

We train over 10000 rounds, each consisting of one percept-action pair, and average rewards over 100 consecutive rounds. Furthermore, we average the learning curves over 50 agents using the same inductive bias but different random number generator seeds. The results are shown in Figure \ref{Figure:InvasionDistraction}, and confirm our expectations: The 1-body agent cannot solve the problem, the 2-body agent learns to solve the problem perfectly and learns the fastest. The 3-body agent also learns to solve the problem, but it learns more slowly. From the standard deviations (shaded areas), we see that the fluctuations are negligible. We can also see that the standard tabular 2-layer Q-learning agent performs slightly better than the standard PS agent (3-body MEPS), but worse than the 2-body MEPS with the optimal inductive bias.  These results confirm that the capacity of the (2,1)-agent to discard the distraction allows it to learn faster than the standard PS and Q-learning agents, which both assign separate h/Q-values for each percept.

\begin{figure}[h!]
    \centering
    \includegraphics[width = 0.7 \linewidth]{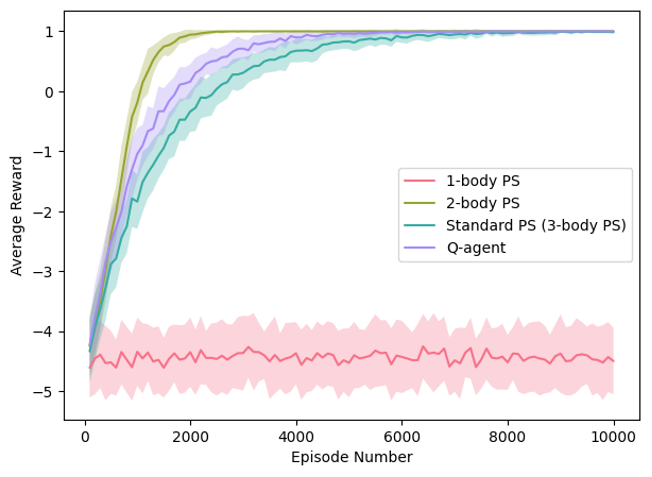}
    \caption{The average reward learning curves in the Invasion Game with Distraction for MEPS agents with the inductive biases discussed in section \ref{Subsection:InvasionGameWithDistraction} and a standard tabular 2-layer Q-learning agent. Note that, in this environment, the 3-body PS is equivalent to standard PS and is used as the comparison of MEPS to standard PS. Each curve is averaged over an ensemble of 50 agents.}
    \label{Figure:InvasionDistraction}
\end{figure}

\subsection{Deceptive Invasion Game} \label{Subsection:DeceptiveInvasionGame}

We now consider an extension of the previous Invasion Game With Distraction environment, where the defender now has access to a greater number of possible actions they can take and the deterministically correct answer to which door the attacker will visit depends on the parity of the sum of the first two observables: an even sum means that the symbol shown to the defender that represents the door number actually corresponds to the door the attacker will go to, while an odd sum means that the attacker will go to the next door over. The third observable maintains its original meaning and purpose from the Invasion Game with Distraction. We call this environment the \emph{Deceptive Invasion Game}. The goal of the defender in this environment differs from the previous environment in that they must learn to also associate the parity of the sums of the first two observables with the truth value of the first observable, and to correctly predict that an odd parity for this sum entails the next door over being the actual door the attacker will go to. 

The first and third observables are comprised of the same values from the previous environment, while the second observable can take values in the range 10-13 and the range of values for the defender's actions now mirrors that of the first observable. This has the effect of inflating the number of trainable parameters used by the agent for each of the many-body cases considered previously. The $(1, 1)$-agent now has $(10+10+4)\cdot 10 = 240$ many-body h-values, while the $(2, 1)$-agent has $ 10^3 + 10\cdot 4 \cdot 10 + 4 \cdot 10 \cdot 10  = 1800$, and the $(3, 1)$-agent has $4\times 10^3 = 4000$. Note that the $(3, 1)$-agent is again representing a standard PS agent here.

The first observable is interpreted as the door announced by the attacker. The reward is determined based on even and odd parity cases of the first two observables. A reward of +2 is given to the defender if they choose the door shown to them by the attacker in the even parity case, while in the odd parity case, a reward of +2 is given if the agent chooses the next door over. If the defender chooses any other door then they receive a harsh negative reward of -10 to effectively deter them from selecting that option during future deliberations. In the odd parity case, if the agent picked the door announced by the attacker, they receive an additional penalty of -1 (i.e. -11 in total), interpreted as the defender being deceived by the attacker. The agent also gets an additional punishment of $-1$ for picking the wrong door in the even case.

What is meant by deception in this environment is that the attacker can do something different than what they convey in the percepts shown to the defender. From an interpretability perspective, this is what an external observer would hypothesize is happening if they observed the attacker's movements and had access to the reward structure of the environment. A query to the defender after training would also reflect this if learning was successful. However, from the defender's point of view, since they do not know the meaning of any of the attacker's symbols \textit{a priori}, they will blindly learn the policy that maximizes the reward received from the environment and have no concept of "deception" (unless they are somehow given this concept). 

It is again expected that the $(2, 1)$-agent will reach the optimal policy the quickest, followed by the $(3, 1)$-agent taking more time due to the processing of irrelevant percepts represented by the third observable. Due to the more complex reward structure of this environment compared with the Invasion Game With Distraction, the $(1, 1)$-agent's task will become even less feasible than it already was because it has to cut through the attacker's deception on top of the already present obstacles in the Invasion Game With Distraction environment.

Looking at Figure \ref{Figure:InvasionDeception}, we can see similar behaviour to the agent in the Invasion Game with Distraction: the 2-body agent reaches the optimal policy the quickest, while the 3-body agent still learns the optimal policy more slowly than the 2-body agent. The 1-body agent unsurprisingly still cannot learn the optimal policy, but notice that it gets stuck near one of the worst policies. This is a puzzling observation since the 1-body agent should be able to represent much better policies than it actually learns: an agent that always looks at the door announced by the attacker and always picks that door (or always picks the next door) should achieve a significantly better average reward than random guesses. 

\begin{figure}[h!]
    \centering
    \includegraphics[width = 0.7 \linewidth]{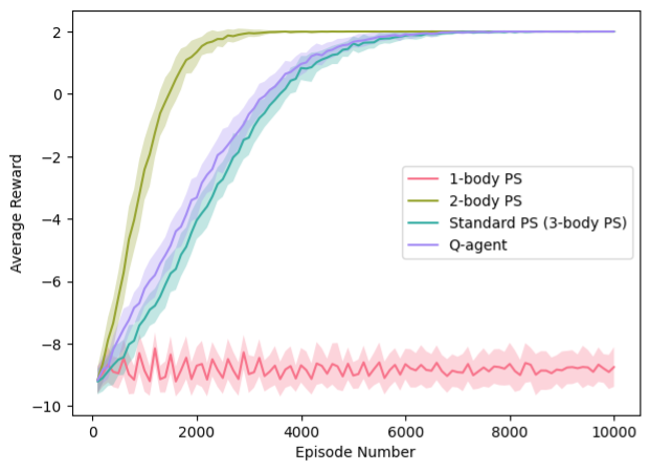}
    \caption{The average reward learning curves in the Deceptive Invasion Game for MEPS agents with the inductive biases discussed in section \ref{Subsection:DeceptiveInvasionGame} and a standard tabular 2-layer Q-learning agent. Similar as in Figure \ref{Figure:InvasionDistraction} the 3-body PS is equivalent to standard PS and is used as a baseline comparison to MEPS. Each curve is averaged over an ensemble of 50 agents.}
    \label{Figure:InvasionDeception}
\end{figure}

We believe that the explanation for this puzzle is the following: while the described policies are much better than random guesses, the involved transitions still receive negative rewards on average. The update rule decreases the corresponding $h$-values, making the transitions less likely. Meanwhile, the $h$-values of transitions that are never picked are never decreased. In other words, the less severe average punishments are compensated by applying them more often. This argument implies that a 1-body agent initialized with the better policies would unlearn these policies since the involved transitions get punished often, even if individual punishments are less severe on average. 

The performance of the standard tabular 2-layer Q-learning agent here is almost identical to that in Subsection \ref{Subsection:InvasionGameWithDistraction}, but now the performance gap between it and the standard PS agent (3-body MEPS) is even smaller. This also means that the 2-body MEPS agent has a much greater advantage over the standard tabular 2-layer Q-learning agent than before, which may be attributed to the increased memory size of the agents in this environment and the increased complexity of the reward function in the environment.

\subsection{Computer Maintenance} \label{Subsection:ComputerMaintenance}

The final environment we consider is that of diagnosing and fixing a broken computer, which we call the \emph{Computer Maintenance} environment. Computer repair is an everyday task that can be quite complex, with many possible causes in various systems giving rise to any particular problem; and yet, the task is accomplished daily, making it not so complicated as to be completely intractable. Computer technicians can manage this task complexity because they can keep track of multiple variables simultaneously, which is a daunting and ultimately unfeasible situation for an agent restricted to single excitations only. It is for these reasons that we choose the Computer Maintenance environment to highlight the capabilities of an agent in a complex environment who considers multiple excitations throughout a non-trivial chain-of-thought, as well as the usefulness of the inductive biases in reducing ECM size. 

We choose to visualize the Computer Maintenance environment as follows. A customer enters a computer repair shop with a broken computer and asks the technician to find the root causes of the issues and fix them. While the technician is diagnosing the problem, several symptoms indicative of possibly many underlying causes of the problem become apparent, which could include a combination of software and hardware issues. The technician must now identify the relevant components and assert a hypothesis about what the causes of the symptoms are given these components, then apply appropriate fixes that will hopefully solve the problem. Along with an explanation of the underlying causes of the problem, the technician also presents an invoice to the customer that states the amount of time and resources used to repair the computer.

\begin{figure*}[t!]
    \centering
    \includegraphics[width = 1.0\textwidth, height = 6.5cm]{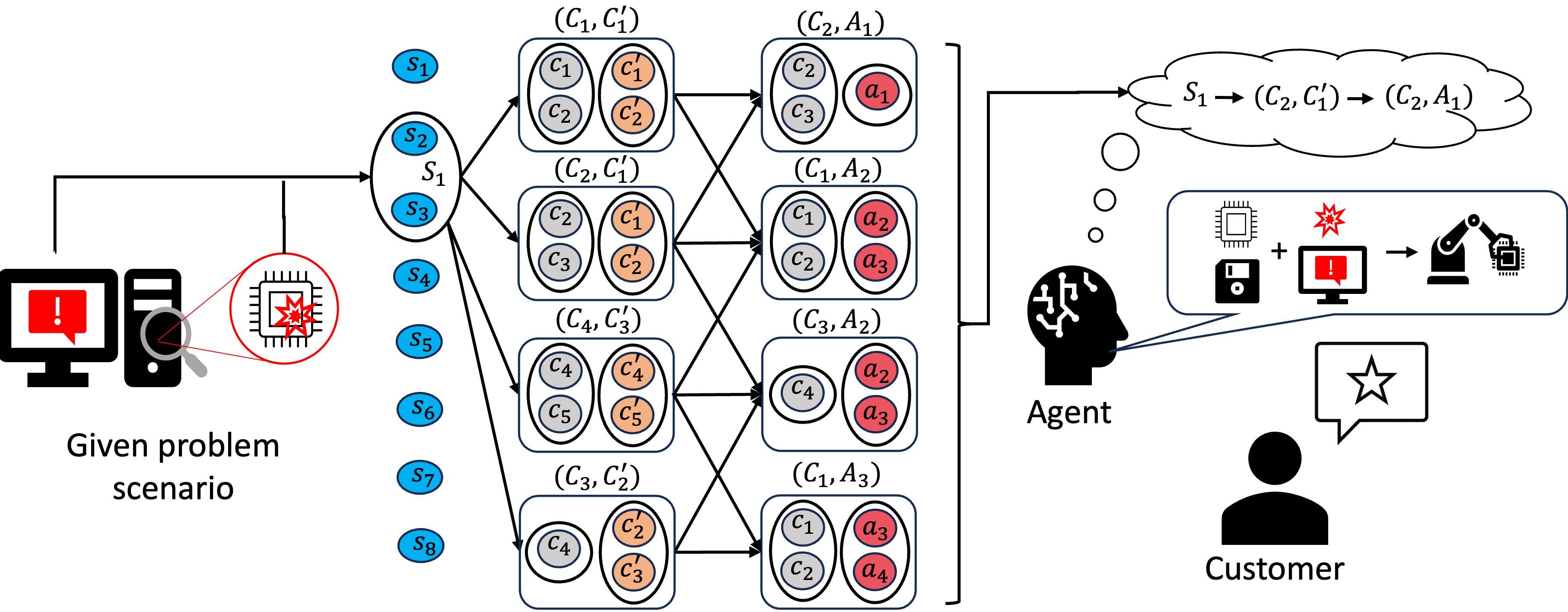}
    \caption{A schematic illustration of one environment step and the MEPS agent's ECM during this step (in the Computer Maintenance environment). The symptoms are represented by the blue dots; the components by grey dots; the causes by orange dots; and the fixes by red dots (the numbers correspond to the place of that object in Table \ref{Table:CompMaintenanceClips}). The boxes denote pairs of excitation configurations that the agent can transition to/from. The agent first observes a pair of symptoms that get coupled into a percept source hyperedge $S_1$ which triggers deliberation through pairs of components/causes ($C_i, C^{'}_j$) and components/fixes ($C_i, A_j$), ending with a chain-of-thought that the agent then converts into an explanation to the customer who determines their reward. As an example, consider a clumsy owner who drops their computer on the ground, causing physical damage to the motherboard (MoBo) that results in a software error in the storage unit (SSD), leading to physical damage in the SSD, which then also causes physical damage to the MoBo in a feedback loop. Such a situation in which software interacts with hardware can occur when faulty programs overuse resources (resulting in heat or electrical damage to the computer) or create conflicting processes that lead to this. In the Computer Maintenance environment, this scenario would be coded as: [[`files disappearing', `visible markings on components'], [`MoBo', `SSD'], [`physical damage', `software damage'], [`replace components']].}
    \label{Figure:CompMaintenance}
\end{figure*}

Translating the previous description to a reinforcement learning setting, the computer technician is interpreted as our MEPS agent who can receive sets of symptoms generated from the environment as percepts and perform actions on the environment in the form of selecting sets of components and corresponding fixes to those components (as a pair of subsets). The environment contains a set of lists of the possible symptoms, components, causes, and fixes whose text descriptions are encoded as integers such that the agent is unaware of the association between the two \textit{a priori}; Table \ref{Table:CompMaintenanceClips} displays them.

\begin{table}
\centering
\begin{tabular}{| p{.19\textwidth} | p{.11\textwidth} | p{.17\textwidth} | p{.19\textwidth}|}
\hline
Symptoms & Components & Causes & Fixes \\
\hline
PC overheating: 1 & CPU: 15& physical damage: 20& replace components: 11\\
\hline
files disappearing: 2 & SSD: 16& software damage: 21& install missing software: 12\\
\hline
visible markings on components: 3 & MoBo: 17& malware: 22& cooldown computer: 13\\
\hline
unexpected shutdowns: 4 & PSU: 18& faulty: 23& run antivirus: 14\\
\hline
slow performance: 5 & OS: 19& not connected: 24&\\
\hline
old hardware: 6 & & &\\
\hline
strange noises: 7 & & &\\
\hline
software glitches: 8 & & &\\
\hline
 blue screen: 9& & &\\\hline
 no internet: 10& & &\\\hline
\end{tabular}
\caption{The symptoms, components, causes, and fixes in the Computer Maintenance environment along with their integer encodings.}\label{Table:CompMaintenanceClips}
\end{table}

Elements from each of these sets are then combined to form what we call `scenarios,' which are used to fix and specify a specific problem with a unique goal state that defines the length of a training episode: the agent repeatedly applies their policy until reaching the goal state, marking the end of the episode. At the beginning of each episode, a scenario is sampled uniformly at random and the corresponding subset of symptoms contained in it are then used for that episode. The chosen percept is fixed for the duration of the episode to more closely emulate the real situation assuming no new problems arise during the repair process and that the symptom set distinguishes one problem from another. Allowing new symptoms to arise during each step would confuse the agent on what the actual problem was since a given symptom set typically only corresponds to a small group of issues, thus preventing any explanation that agrees with the specified scenario.

The agent must learn to navigate and solve 44 different scenarios in this work, which we constrained to have at most 3 elements per category to better demonstrate the savings accrued from reducing the size of the ECM. An example scenario is detailed in Figure \ref{Figure:CompMaintenance}. It is immediately clear from this example scenario that the feedback loop between the MoBo and the SSD demands that these components and the associated causes all be treated together if the agent hopes to solve the problem -- something the MEPS agent is better suited to handle. The only hope that the single-excitation agent has of solving the scenario is to exponentially inflate the size of their ECM so that all possible combinations of the environment variables are distributed in all atomic clips, which is essentially the strategy of the unrestricted MEPS agent. If each environment variable can take sufficiently many values, the previous strategy will undoubtedly fail since the random walk path in the agent's ECM that represents the correct solution to a given scenario will have a vanishingly small probability of occurring.

To implement longer chains-of-thought and highlight the benefit of moving from the single- to multi-excitation agent case, we incorporate a hidden layer into our MEPS agent where the agent's hypotheses about the underlying causes of the problem are represented by pairs of sets of components and causes. Therefore, we use two many-body $h$ values: the first for transitions between the percept layer and hidden layer, and the second for transitions between the hidden layer and action layer. This change to a 3-layer agent and the use of pairs of subsets as hidden and action layer elements introduces some modifications to how the inductive bias is applied compared to the previous two Invasion Games. Firstly, because we have pairs of subsets for some layers, a many-body cutoff applied to the pair will sometimes force the agent to consider a subspace for one of the subsets that is larger than the scenario demands, since the scenario elements can be subsets of unequal size in general; this slows down learning unnecessarily, so many-body cutoff values are now assigned for each element in the pairs. It is also necessary to have $IO$ contain all values up to and including a specified many-body cutoff value, for the same reason stated in the previous sentence. Lastly, because we have two many-body $h$ values now, we can specify different sets of many-body cutoffs for each of them. 

An inductive bias agent will be represented with the following notation: $\{n_s, [n_{hc}, n_c], [n_{ac}, n_f]\}$, which we call an inductive bias configuration. Here, $n_s$ is the many-body cutoff on the number of symptoms in the inductive bias agent's percept layer; $n_{hc}$, the cutoff on the number of components in the hidden layer; $n_{c}$, the cutoff on the number of causes in the hidden layer; $n_{ac}$, the cutoff on the number of components in the action layer; and $n_{f}$, the cutoff on the number of fixes in the action layer. Since we are discarding leftover excitations, and because we have a layered, feed-forward architecture, the inductive bias used here corresponds to Inductive Bias 2DP from ~\ref{Appendix:OtherInductiveBiases}. It differs from Inductive Bias \ref{Bias:Layered} only in the fact that it discards passive excitations, i.e. $C_{\mathrm{out}} := C^{(o)}_{\mathrm{out}} $. We can calculate the number of learning parameters $N_l$ for each agent from Equation \eqref{Equation:num_learn_param} in \ref{Appendix:BrokenCompTrainDetails}, which we will use throughout the rest of this section. 

We do not use the glow or forgetting mechanisms in the learning process for our MEPS agent due to the following factors. 1) The distribution that governs symptom and scenario generation within the environment does not change with time, it is simply always the same uniform distribution; and 2) the fact that the actions do not directly influence the percepts within an episode since the percepts are frozen at the beginning of the episode.

We choose the inductive bias agent with configuration $\{2, [3, 2], [3, 2]\}$ to train on the Computer Maintenance environment. This configuration represents the agent with the smallest ECM, at $N_l = 114375$ trainable parameters, necessary to solve all of the scenarios considered in the training set; it is expected that this agent will be able to reach near-optimality in the fewest number of steps. We also use the unrestricted agent (serving as the standard PS agent) and a tabular PS-inspired multi-layer Q-learning agent (details to follow), both with $N_l = 1429968$ trainable parameters, for comparison against the inductive bias agent; a difference of about 13 times the number of trainable parameters. Since the standard PS (unrestricted) and tabular PS-inspired multi-layer Q-learning agents see the whole space, they are coded differently to take advantage of the extra speed that certain array structures are endowed with. They always choose the full percept/intermediate clip/action configuration at the start of each deliberative phase and also use a layered, feed-forward ECM. We can represent the standard PS (unrestricted) agents' architecture using the configuration $\{N_s, [N_c, N_{ca}], [N_c, N_f]\}$ (defined in \ref{Appendix:BrokenCompTrainDetails}). The standard PS (unrestricted) and tabular PS-inspired multi-layer Q-learning agents are expected to reach the optimal policy but using many more steps than the inductive bias agent as they are forced to consider a multitude of irrelevant transitions. Note that the inductive bias architecture requires much more real elapsed time than for the standard PS (unrestricted) or tabular PS-inspired multi-layer Q-learning agents (since the code requires better python control flow), so any comparisons between them will be made on the level of total steps taken on average to reach the maximum reward.

Figure \ref{Figure:CompMaintenance} illustrates the process leading up to the agent receiving a reward from the environment. The reward is decomposed into two separate mechanisms that are each applied to different $h$-values, which we call the hypothesis and plausibility rewards. We do this to avoid washing out the percept information that tends to happen when blindly applying the standard PS update rule to intermediate layers. The splitting of the reward motivates a modification of the standard tabular 2-layer Q-learning agent to the multi-layer setting so that it can be compatible with the computer maintenance environment, and thus a fair comparison can be drawn between it and our model. The architecture of PS provides a good general template for how to accomplish this modification, which is actually suitable for any tabular method provided one takes appropriate measures to address the credit assignment problem. The idea is to define a tabular memory corresponding to each layer and take the existing tabular deliberation rule and apply it to each layer, taking the output of each layer and feeding it into the next one as the input. The tabular update rule is then applied to the tabular memory corresponding to each layer. For the Q-learning algorithm, we define the set of Q-tables at time $t$  as $\{Q^{(t)}(C_0, C_1), Q^{(t)}(C_1, C_2), \ldots, Q^{(t)}(C_{n-1}, C_n)\}$, where $C_0$ are the percepts, $C_n$ are the actions, and $C_i$ are the hidden memory elements corresponding to layer $i$. The deliberation and update rule for an arbitrary layer $i$ are then
\begin{align}
C^{*}_{i+1} &= \text{arg}\max_{C_{i+1}}{Q^{(t)}(C^{*}_i, C_{i+1})}\\[1ex]
Q^{(t+1)}(C_i, C_{i+1}) &= (1-\alpha_i)Q^{(t)}(C_i, C_{i+1})+\alpha_i \left(R^{(t)}+\lambda_i\max_{C'_{i+1}}{Q^{(t)}(C'_i, C'_{i+1})}\right)\,,
\end{align}
respectively. Here $\alpha_i$ is the learning rate, $\lambda_i$ is the discount factor, $C^{*}_i$  is a previously selected memory element, and $C'_i$ is a memory element corresponding to the deliberation chain that originated from the newly generated percept received after the agent's most recent action, each for layer $i$. Other tabular methods can be similarly defined but we will focus on the one defined above for the remainder of this section. Note that while the method described above can promote the architecture of tabular methods to be functionally similar to that of MEPS, this does not endow the agent with the same semantics as MEPS. The PS-inspired multi-layer Q-learning agent described above, for example, cannot have its deliberation be thought of as a random walk through a network of clips - the reason being that Q-learning picks the highest Q-value to choose its action. Therefore, MEPS should be thought of as conceptually distinct, with the ability to inspire new methods in other parts of machine learning.

The hypothesis reward is measured based on how close each intermediate layer element is to its corresponding partner in the given scenario; it is applied to the $h$-values between the percept and intermediate layers. If those elements match exactly, a reward of +5 is given, otherwise, a large penalty of -10 is applied. An additional component of the reward penalizes the agent by up to -1 if they pick more elements than there are in the given scenario; an attempt to discourage the agent from selecting the maximum number of elements permitted by their inductive bias (becomes more important for increasing many-body cutoff size). 

The plausibility reward, which is applied to the $h$-values between the intermediate layer and the action layer, has three different components to it. The first component has the same structure as the hypothesis reward but the values are quadrupled for the penalty on too many elements and it is for the agent's chosen elements from the action layer instead of the intermediate layer. The second component checks whether the chosen components from the intermediate layer match those chosen from the action layer; a reward of +1 is given if they match perfectly, +0.25 if all of the action layer elements match but more intermediate layer elements are chosen then action layer elements, and -2 otherwise. This component of the reward can be interpreted as an internal consistency mechanism that encourages the agent to form coherent explanations between hypothesis and action. Note that it does not directly refer to the scenario key and is thus a general mechanism that can aid learning on scenarios not considered in the original training set. The third component of the reward checks whether the agent's chosen fixes make sense with respect to the underlying causes they identified in their explanation. This is judged based on certain causal relationships between what the causes and fixes each refer to, that are put in by hand. For example, implementing the fix `replace components' would be justified if the cause of the problem was suspected to be `physical damage' or `faulty,' since both indicate problems with the hardware that are only fixable by physically removing them. However, if the suspected cause was simply `malware,' then only selecting `replace components' would not be justified because this is a fix one implements when there are hardware problems, not software. Now, for each possible fix there are specific causes that need to be selected for the agent to receive the associated reward of +0.3 divided by the number of fixes specified in the chosen scenario key $n_{\mathrm{fix}}$; this ensures that the agent can converge to the optimal reward since some scenarios require different numbers of fixes to solve than others. A penalty of up to $-4/n_{\mathrm{fix}}$ is given if the proper causes are not selected. Aside from the use of $n_{\mathrm{fix}}$, the third component of the plausibility reward is also independent of the scenario key, adding another general mechanism for augmenting learning beyond the training set.

Finally, if all elements in the agent's explanation exactly match those in the scenario key, the agent receives a big bonus to both rewards of +15 to amplify the probability of selecting this deliberation path again in future episodes. Once this bonus is triggered it signals an end to the episode and a fresh set of symptoms corresponding to another scenario are then selected. Before receiving this signal and before the agent factors in the reward into their update rule definition \ref{Definition:MEPSlearning}, the reward is first sent to an external reward shaping function defined by Equation \eqref{Equation:RewardShape} whose purpose is to discourage the agent from taking too many steps within an episode, intending to curtail arbitrarily long episodes; details about this function can be found in \ref{Appendix:BrokenCompTrainDetails}.

All MEPS agents for each of their layers use the softmax function with $\beta = 1/2$ to convert $h$-values to probabilities and all $h$-values are initialized to 1. The tabular PS-inspired multi-layer Q-learning agent uses the same values for their hyperparameters from the previous Invasion Game environments for each layer. The results of the training are shown in Figure \ref{Figure:BrokenCompInductiveBias}. We can see that the inductive bias agent achieves near-optimal values for both the hypothesis and plausibility rewards using far fewer steps but requiring more episodes than both the standard PS (unrestricted) and tabular PS-inspired multi-layer Q-learning agents do, as seen in the total step number per episode curve, which better illustrates the savings from the inductive bias as the two average reward plots do not show how many total steps have elapsed during training. The total step number per episode curve also shows roughly exponential decay, which is indicative of the agent first trying out scenario configurations randomly and then always picking the right configuration immediately. To quantify the difference in step number to the optimal reward, we calculated the total average steps taken over 600 episodes for all agents: the inductive bias agent takes roughly 16766 steps, while the standard PS (unrestricted) and tabular PS-inspired multi-layer Q-learning agents each take roughly 35817 and 115261 steps, respectively. Together with the complexity-theoretic results from \ref{Appendix:Complexity}, we only expect this difference in step number to grow. Notice also how the tabular PS-inspired multi-layer Q-learning agent requires 79444 more steps on average than the standard PS (unrestricted) agent does, and that it doesn't follow a strict exponential decay curve for the average total step number. We suspect the reason for this is that, because the tabular PS-inspired multi-layer Q-learning agent has a deterministic policy with no exploration, it finds the correct scenario relatively quickly but then has difficulty adapting to new scenarios, hence the initial increase in average step number. Although the standard PS (unrestricted) and tabular PS-inspired multi-layer Q-learning agents do converge to the optimal rewards in fewer episodes, it clearly takes more steps to do so because they both have roughly 13 times the number of trainable parameters as the inductive bias agent has. The fact that we get comparable performance from the inductive bias agent using far fewer parameters justifies the relatively small fluctuations around the optimal value for each of the reward curves, which are expected to decrease as more agents are included in the average.

\begin{figure}[t!]
    \centering
    \includegraphics[width = 0.55 \linewidth]{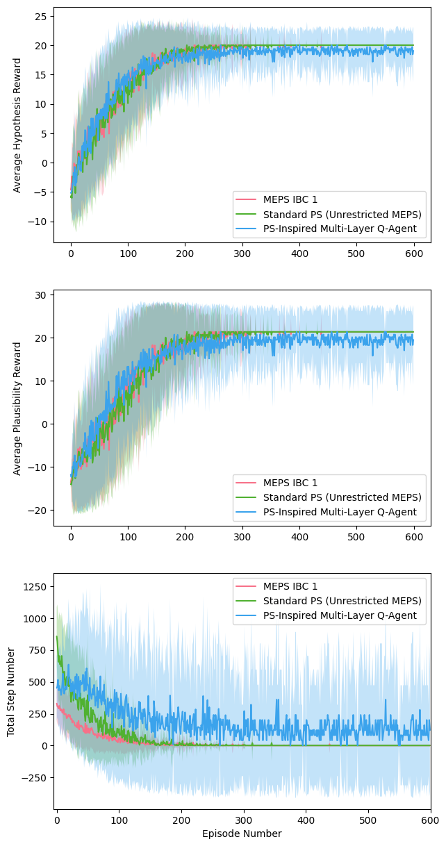}
    \caption{The average hypothesis and plausibility rewards, and step number per episode learning curves in the Computer Maintenance environment for the agent with inductive bias configuration $\{2, [3, 2], [3, 2]\}$ (IBC 1), the standard PS (unrestricted) agent, and the tabular PS-inspired multi-layer Q-agent. Each agent is trained for 1000 episodes (600 shown), where the number of steps taken within an episode varies from 1 (optimal) to around 800 (worst). Note that the x-axis of the total step number plot is restricted to 600 episodes so that the details of the early episodes can be better resolved, since the curves beyond this point are essentially flat. Each curve is further averaged over an ensemble of 50 agents. The standard deviation around the curves (shaded areas) is used to represent fluctuations but should not be interpreted as points that the individual agents have necessarily visited.}
    \label{Figure:BrokenCompInductiveBias}
\end{figure}

\section{A Quantum Motivation of the Inductive Bias and a Path Towards Quantization}
\label{Section:Quantum}

In Section~\ref{Section:Bias}, we gave a qualitative, conceptual motivation of our inductive biases. There, we identified that the size of the power set $\mathcal P(V)$ can lead to exponential costs, because in general all of its elements might be valid excitation configurations for which we have to define $h$-values. 

To get around this issue, we took a look at many-body physics, in which fundamental scattering processes typically only involve a handful of particles. This guided us to formulate a classical inductive bias in which complex transitions from observations to actions are given by successions of few-body transitions. 

However, the analogy to quantum many-body physics can be worked out more formally, and this in particular suggests a natural quantization for analog quantum computers. Therefore, in Section~\ref{Section:Motivation} we will work out this formal analogy, while in Section~\ref{Section:QuantumMEPS} we will provide some basic considerations for an actual quantum implementation.

\subsection{Full Quantum Motivation}
\label{Section:Motivation}

In this section, we will provide more formal details on the analogy to quantum many-body physics which inspired our inductive biases. For that purpose, we start by explaining how excitation configurations are represented in quantum physics. Specifically, each atomic clip in our hypergraph can be interpreted as a mode in quantum many-body physics and in the formalism called \emph{second quantization}~\cite{ManyBody1,ManyBody2}, each excitation configuration is represented by a vector $\ket{n_1, \dots, n_{|V|}}$ in a Hilbert space, where $n_j$ is the number of excitations in atomic clip/mode $j$.

Time evolution over the time duration $\Delta t$ is described by an operator $U(\Delta t) = e^{-i\Delta t \cdot H}$, where $i$ is the complex unit, $H$ is an operator called the \emph{Hamiltonian}, and $e^{\bullet}$ is the matrix exponential. This means after a duration $\Delta t$, a many-body system starting in a state $\ket{n_1, \dots, n_{|V|}}$ is afterwards described by the state $e^{i \Delta t \cdot H}\ket{n_1, \dots, n_{|V|}}$.

Important for our inductive bias are the typical expressions for $H$. For this, we require the \emph{ladder operators} $a_j$ and $a^\dagger_j$, with $^\dagger$ denoting the hermitian adjoint (i.e. $A^\dagger = \overline{A^T}$, with $\overline{\bullet}$ denoting complex conjugation) and $j \in V$. $a^\dagger_j$ adds one excitation to atomic clip $j$, i.e. $a^\dagger_j \ket{\ldots, n_j, \ldots} \propto \ket{\ldots, n_j+1, \ldots}$ and is called a \emph{creation operator}. Similarly, $a_j$ removes an excitation from an atomic clip $j$, i.e. $a_j \ket{\ldots, n_j, \ldots} \propto \ket{\ldots, n_j-1, \ldots}$, and is called an \emph{annihilation operator}. For the special case $n_j = 0$, we have $a_j \ket{\ldots, n_j, \ldots} = 0$.

A typical Hamiltonian $H$ in second quantization is of the form 
\begin{align} \label{Equation:GeneralHamiltonian}
    H = \sum_{o,i} \sum_{j_1, \ldots , j_i} \sum_{k_1, \ldots , k_o}& h(j_1, \ldots , j_i, k_1,\ldots , k_o )\, a^\dagger_{k_1} \ldots  a^\dagger_{k_o} \cdot a_{j_1} \ldots a_{j_i}
\end{align}
with $h(j_1, \ldots , j_i, k_1,\ldots , k_o ) \in \mathbb C$. In most cases, $o$ and $i$ take very small values (smaller than 10). A commonly used ansatz is of the form 
\begin{equation}
    H = \sum_{j,k} \epsilon_{j,k} a^\dagger_k a_j + \sum_{j_1, j_2, k_1, k_2} V_{j_1, j_2, k_1, k_2} a^\dagger_{k_1} a^\dagger_{k_2} a_{j_1} a_{j_2}\, ,\label{Equation:TwoBodyHamiltonian} 
\end{equation}
where $\epsilon_{j,k}, V_{j_1, j_2, k_1, k_2} \in \mathbb C$. The second term in Eq. \eqref{Equation:TwoBodyHamiltonian} is called a \emph{two-body interaction} because this interaction involves two ingoing and two outgoing excitations. 

For small enough time intervals $\delta t$, the time evolution operator can be approximated as $e^{i H \delta t} \approx \mathbb 1 + i H \delta t$. We discard the identity operator $\mathbb 1$ (in physical terms, we \emph{post-select} on a non-trivial change occurring) because it means that no transition occurred at all. We discretize time in multiples of $\delta t$, and identify each step in the random walk with one application of $i \delta t H$, absorbing $i\delta t$ into the coefficients $h(j_1, \ldots , j_i, k_1,\ldots , k_o )$. 

In many-body physics, given a state of the form $\sum_{n_1, \ldots n_{|V|}} \alpha_{n_1, \ldots , n_{|V|}} \ket{n_1, \ldots , n_{|V|}}$ with $\alpha_{n_1, \ldots , n_{|V|}} \in \mathbb C$ and $\sum_{n_1, \ldots, n_{|V|}} |\alpha_{n_1, \ldots , n_{|V|}}|^2 = 1$, each $|\alpha_{n_1, \ldots , n_{|V|}}|^2$ gives the probability to find the many-body system in excitation configuration $\ket{n_1, \ldots , n_{|V|}}$. If we start from a state $\ket{n_1, \ldots , n_{|V|}}$, under our assumptions, the next state is essentially $H \ket{n_1, \ldots , n_{|V|}}$. Therefore, the transition probabilities are essentially the (modulus square) of the entries of $H$.

For our many-body physics-inspired inductive bias, we identify each term $|h(j_1, \ldots , j_i, k_1,\ldots , k_o )|^2$ from $H$ with an unnormalized transition probability/$h$-value $h_{(i, o)}(\{c_{j_1}, \ldots, c_{j_i} \}, \{ c_{k_1}, \ldots, c_{k_o} \}) \in \mathbb R$. Here, the set symbol $\{ \}$ indicates that the order of atomic clips does not matter (in many-body physics, all $a_j$ commute or anti-commute with each other). A crucial part of our inductive bias is that we demand a cutoff for $i$ and $o$. One can also introduce further physics-inspired assumptions such as particle number conservation, formalized as $i = o$. 

The last ingredient in our inductive bias is the observation that the operators $h(j_1, \ldots , j_i, k_1,\ldots , k_o ) a^\dagger_{k_1} \cdots  a^\dagger_{k_o} \cdot a_{j_1} \cdots a_{j_i}$ also act on excitation configurations $\ket{n_1, \ldots , n_{|V|}}$ that have excitations $n_x = 1$ for $x \ne j_1, \ldots, j_i, k_1, \ldots, k_o$, but leave $n_x$ unchanged. Similarly, we also apply $h$-values $h_{(i, o)}(\{c_{j_1}, \ldots, c_{j_i} \}, \{ c_{k_1}, \ldots, c_{k_o} \})$ to excitation configurations that have additional excitations in unrelated atomic clips $c_x$.

\subsection{First Steps Towards Quantum $\text{MEPS}$} \label{Section:QuantumMEPS}

MEPS and the many-body inductive biases are classical machine learning methods mimicking quantum many-body systems. Therefore, it is natural to consider quantum-mechanical MEPS agents implemented on quantum hardware which uses engineered quantum walks of physical particles. Examples of such quantum hardware include certain kinds of quantum simulators~\cite{Simulators1, Simulators2} and integrated photonics chips~\cite{Photonics, Flamini_Krumm2023}.

As we described in Section \ref{Section:Motivation}, the dynamics of such quantum systems are described by time evolution operators of the form $e^{-i t H}$ (or more generally, $\mathcal T e^{-i\int_{t_1}^{t_2} H(t) \mathrm{d}t}$, where $\mathcal T$ is the time-ordering operator). 

We emphasize that most of the considerations in Section~\ref{Section:Motivation} served to formally develop a classical and discrete analogue. On quantum simulation hardware, we can directly define a parametrized Hamiltonian $H$ of the form in Eq.~\eqref{Equation:GeneralHamiltonian} and define the deliberation as its time evolution, which is a continuous time quantum walk.

The coefficients $h(\{j_1, \ldots, j_i\}, \{k_1,\ldots, k_o\}) \in \mathbb C$ in Eq. \eqref{Equation:GeneralHamiltonian} are then the trainable parameters or functions of the trainable parameters (such as tunable tunnelling amplitudes and 2-body interaction couplings). 

To couple in percepts, one would inject excitations into the corresponding percept modes. Meanwhile, to couple out actions, one would continuously measure the excitation number of action modes (e.g. using stroboscopic measurements) until they meet a condition for coupling out certain actions. The intermediate modes would remain unobserved such that time evolution can be coherent; this would correspond to the internal deliberative process of the agent.

One issue to consider is that physical Hamiltonians $H$ are Hermitian, i.e. $H^\dagger = H$. This has the consequence that $h(\{j_1, \ldots, j_i\}, \{k_1,\ldots, k_o\}) = \overline{h( \{k_1,\ldots, k_o\} , \{j_1, \ldots, j_i\}, )}$, meaning that the transition amplitude to go forward is just as large as the amplitude to go backward. This issue is already present in the quantization of basic PS and was addressed in ~\cite{Briegel_Cuevas2012} by using dissipation (or other irreversible, open quantum system evolutions) to obtain a broader class of parametrized time evolutions.

Another approach is to introduce an extra semi-classical degree of freedom that breaks the symmetry by acting as a clock. One inspiration for how to do this comes from integrated interferometer chips. Here, the photons have a definite velocity in the lateral direction of the photonics chip, but perform quantum walks in the transversal direction~\cite{Photonics}. This approach was applied in a recent proposal for a quantum PS with photons~\cite{Flamini_Krumm2023}.

It was mentioned in Remark \ref{Remark:Occupied} that our choice in classical MEPS about two excitations in the same atomic clip does not faithfully dequantize the behavior of any quantum particles. It would be interesting to analyze how the decision-making process is influenced by also being physically faithful in this regard. For example, the phenomenon of Pauli pressure in fermions could prevent an already excited atomic clip from being excited again, potentially putting the second excitation to better use somewhere else. This has the consequence that the random walks of fermions depend on each other, even if the Hamiltonian has no interaction terms.

\section{Discussion and Outlook} \label{Section:Conclusions}

In this paper, we introduced an XAI framework called MEPS, which allows us to model chains of thoughts as random walks of several particles on a hypergraph. The use of several particles allows for the representation of thoughts relying on the combination of several elementary concepts simultaneously, revealing and exploiting the composite structure of thoughts and thus greatly improving model interpretability. This added flexibility is a stepping stone in developing systematic methods that let us model domain knowledge via the structure of the hypergraph and attach concepts to clusters of relevant vertices on the hypergraph. A new definition for dynamic hypergraph was also introduced to model the agent's training history and serve as a tool for explainability by connecting with the hypergraph visualization literature.

To reduce the exponential complexity of a naive implementation of MEPS to a low-degree polynomial complexity, we defined an inductive bias. This inductive bias is a classical analogue of the time evolution of quantum many-body systems. This inductive bias includes a cutoff regarding how many particles can participate in an interaction. We proved that our inductive bias indeed leads to a polynomial complexity, with the degree given by the interaction cutoff. We believe that our inductive bias does not severely restrict the potential of MEPS agents in many scenarios. This belief is motivated by the fact that humans can also only combine a handful of concepts simultaneously. Nonetheless, humans display an unmatched ability to quickly adapt to a wide range of environments. 

The explainability of MEPS and the power of our inductive bias were demonstrated in three synthetic environments: two extensions of the Invasion Game and a broken computer diagnosis and repair scenario. The Invasion Game modifications were chosen to visualize in a clean and simple setting the impact of an appropriate many-body inductive bias on the learning process. Using less than necessary excitations leads to bad returns (``underfitting''), while using extra unnecessary excitations slows down the training. In the Computer Maintenance setting, we used a multi-layered agent to provide chains-of-thought of length 2, where we distinguished between the agent's belief about the causes of problems and the fixes necessary to solve those problems. Hypothesis and plausibility rewards were introduced and applied to different segments of the deliberation path to overcome the credit assignment problem, which tends to occur when blindly applying the basic PS update rule to intermediate layers. The structure of the plausibility reward in particular, encoded causal elements that offered a mechanism for the agent to weakly generalize to unseen percepts. The multi-layer architecture combined with the reward structure helped demonstrate the ease with which a MEPS agent can successfully navigate a complex, real-world-inspired environment while maintaining explainability. The inductive bias also proved useful in greatly cutting down the total number of steps and model trainable parameters required to reach the optimal policy, in particular, compared to non-PS algorithms such as Q-learning and its tabular PS-inspired multi-layer extension introduced in this work.

At last, we presented basic approaches for how to develop a quantization of our classical MEPS and the inductive bias suitable for actual quantum computers, focusing on near-term quantum simulators and integrated photonics hardware. In particular, we reviewed obstacles one will encounter, such as hermiticity/unitarity constraints of the time evolution operators, and potential mitigation strategies.

There are many avenues to build upon our work, with the most obvious one being an application of our methodology to other types of learning settings. A fruitful strategy might be to look at the behavioral biology examples considered in previous PS literature~\cite{Bees, Foraging}, where multiple excitations can be used to explicitly represent different concepts that matter to the animal or agent to make a decision, like the presence of pheromones and threats, or the creation of a mental map of the environment. Similarly, MEPS could be used to model phenomena from psychology, such as the behaviour of a cat modelled in \cite{Muller_Briegel2018}, which one would not consider in typical machine learning settings.

Quantum many-body systems have additional properties which we did not consider in this work. One important such property is that particles can only directly interact if they are physically close. Particles classified as fermions tend to avoid each other, and the related mechanisms are known under names such as \emph{Pauli-exclusion}, \emph{Pauli-pressure}, and \emph{excluded volume}. Nonetheless, local elementary interactions of few particles give rise to most phenomena known in physics. Therefore, it might be worthwhile to model an additional inductive bias which formalizes a notion of distance between atomic clips, and restricts $h^{(n)}_{(i,o)}$ to only be non-zero for close excitations. 

Another important property of quantum many-body systems concerns the question of what happens if one tries to put an excitation on an already excited atomic clip. As explained in Remark~\ref{Remark:Occupied}, we made an unphysical choice motivated by the interpretation of atomic clips as concepts. Therefore, it would be interesting to investigate what would happen if one instead mimicked physics in this aspect. For fermions, putting an excitation in an already excited atomic clip would result in an empty atomic clip. For bosons, several excitations could be in the same atomic clip, meaning one would first have to model an extension of MEPS that allows several excitations on the same atomic clip.    

The binary nature of our excitation configurations suggests the existence of potential relations to the field of Neuro-Symbolic (NeSy) logic~\cite{NeSy1, NeSy2}. Indeed, one can read the presence of an excitation on an atomic clip as a truth state that the concept represented by the atomic clip is currently relevant or applicable. Investigating these relations might lead to a fruitful cross-fertilization between the two fields. However, it is also important to emphasize the differences. An important ambition of our MEPS scheme is to model various chain-of-thought processes, not just those relying on formal logic to process facts of the environment. These also include thought processes that underly irrational or bounded rational decisions, as studied in psychology and the decision sciences. Furthermore, MEPS comes with a natural update rule based on h-values that could also be applied to NeSy.

On the numerical side, compiling the for-loops for or partially parallelizing the deliberation process, or using advanced Monte Carlo Simulation~\cite{MonteCarlo1, MonteCarlo2} software might significantly speed up MEPS agents. For Python implementations, a first route towards this goal might be to use Cython~\cite{Cython} or just-in-time compiling modules like Numba~\cite{lam2015numba}. To numerically profit from our inductive biases, it is important to sample transitions in a way that does not require iterating over the full power set of atomic clips. We already formulated one method for sampling transitions, but we have no guarantee that it is the best possible implementation. Indeed, we appealed very little to results from the mathematical literature on hypergraphs such as hypergraph expansion techniques (star, cluster, line, etc.), Laplacian spectral clustering techniques, factorization of hypergraph matrix representations, and other hierarchical partitionings of hypergraphs in developing MEPS ~\cite{GraphBook1, GraphBook2, Dai_Gao2023}. We also assumed the atomic clips within each hyperedge were of equal importance, which may not be the case in situations where some information or properties of the data are privileged over others, so incorporating atomic clip weights into the training process could be beneficial in this regard ~\cite{Dai_Gao2023}. Further exploration into the hypergraph literature will likely yield many improvements for our MEPS methodology and should be considered as an important next step in the development of MEPS; chiefly for the scalability of the model. 

Furthermore, we employed a dual reward mechanism to avoid the credit assignment problem but we did not explore other potential mitigation strategies that could alleviate this problem. A search for systematic methods to find good initialization strategies, or to adapt Imitation Learning methods to our setting~\cite{ImitationSurvey1, ImitationSurvey2} might prove useful.

\section{Acknowledgements}
This research was funded in part by the Austrian Science Fund (FWF) [10.55776/F7102, 10.55776/WIT9503323]. For open access purposes, the authors have applied a CC BY public copyright license to any author-accepted manuscript version arising from this submission.
We gratefully acknowledge support from the European Union (ERC Advanced Grant, QuantAI, No. 101055129). The views and opinions expressed in this article are however those of the author(s) only and do not necessarily reflect those of the European Union or the European Research Council - neither the European Union nor the granting authority can be held responsible for them.

\bibliographystyle{elsarticle-num} 
\bibliography{MEPS}

\appendix
\section*{Appendices}

\section{Additional Computer Maintenance Training Details} \label{Appendix:BrokenCompTrainDetails}

We can calculate the number of learning parameters $N_l$ for each agent from the expression 
\begin{align} \label{Equation:num_learn_param}
N_l &= \sum^{n_{hc}}_{k_2=1}\sum^{n_c}_{k_3=1}{N_c \choose k_2} {N_{ca} \choose k_3}\bigg(\sum^{n_s}_{k_1=1} {N_s \choose k_1}  \\
&\qquad \qquad \qquad + \sum^{n_{ac}}_{k_1=1}\sum^{n_f}_{k_4=1} {N_c \choose k_1} {N_f \choose k_4}\bigg) \nonumber \,,
\end{align}
where $N_s$ is the total number of symptoms; $N_c$, the components; $N_{ca}$, the causes; and $N_f$, the fixes. In this work, the values for each of these numbers are $N_s=8$, $N_c=5$, $N_{ca}=5$, and $N_f=4$.

The external reward shaping function $f(R; b)$ that is used in Subsection \ref{Subsection:ComputerMaintenance} has the form
\begin{align} \label{Equation:RewardShape}
f(R; b) = \max{\left(R - \theta(b)\frac{1}{4}\ln{(b+1)} , -16\right)}\,,
\end{align}
where $R$ is the reward, $\theta(x)$ is the step function, $b = a-a_{\mathrm{max}}$, $a$ is the current number of steps taken in an episode, and $a_{\mathrm{max}}$ is the maximum number of steps allowed before a penalty is applied. Values of $a_{\mathrm{max}}=500$ and $a_{\mathrm{max}}=1000$ were used in this work for the inductive bias and unrestricted agents, respectively.

\section{Modifications of the Inductive Bias}
\label{Appendix:OtherInductiveBiases}

In the main text, we mentioned two modifications of Inductive Bias~\ref{Bias:Layered} which guarantee low-order polynomial random walk lengths in the depth and width of the layered ECM. Now, we formulate their precise definition:

\begin{bias}{2SF and 2DP} \label{Bias:2Appendix}
    For weighted, layered feed-forward many-body hypergraphs with layers $(L_1,\dots , L_D)$, we introduce the following two modifications of Inductive Bias~\ref{Bias:ManyBody} and \ref{Bias:Layered}:

\begin{enumerate}
    \item[\textbf{SF.}] The \emph{ShallowFirst} (SF) Inductive Bias is the same as Inductive Bias \ref{Bias:Layered}, except for the following restriction:

    Let $\ell: V \rightarrow \{ 1,2, \dots D\}$ be the function that maps each atomic clip to the layer it is in. Then, given an excitation configuration $\{c_{m_1},\dots , c_{m_x}\}$, with the labelling such that the layers satisfy $\ell(c_{m_1}) = \cdots = \ell(c_{m_n}) < \ell(c_{m_{n+1}}) \le \cdots \le \ell(c_{m_x})$, the relevant $h_{(i,o)}(C_{\mathrm{in}}, C_{\mathrm{out}})$-values for this configuration are restricted to those with $C_{\mathrm{in}} \subset \{ c_{m_1}, \dots, c_{m_n} \}$.

    \item[\textbf{DP.}] The \emph{DiscardPassive} (DP) Inductive Bias is the same as Inductive Bias \ref{Bias:Layered}, except for the following modification:

    When performing a deliberation/random walk step on excitation configuration $\{c_{m_1}, \ldots , c_{m_x} \}$ with $h_{(i,o)}\big(\{c_{j_1}, \dots, c_{j_i} \}, \{c_{k_1}, \dots, c_{k_o}\}\big)$, all excitations except for those in $ \{c_{k_1}, \dots, c_{k_o}\}$ are discarded.
    \end{enumerate}

    Furthermore, we require that all random walks couple out an action if all excitations are in layer $L_D$, or earlier.
\end{bias}

In ~\ref{Appendix:RelationHypergraphs}, we will show that all inductive biases, including 2SF and 2DP, satisfy Definitions~\ref{Definition:MEPSarchitecture} and \ref{Definition:MEPSdynamics} when the standard probability assignment is used.

In ~\ref{Appendix:Complexity}, we will prove the promised upper bounds. 

Now, to demonstrate how these inductive biases work, we revisit Example~\ref{Example:InductiveBiases}, and this time include Inductive Biases 2SF and 2DP.

\begin{figure}[h!]
\centering
\includegraphics[width = 0.5\textwidth]{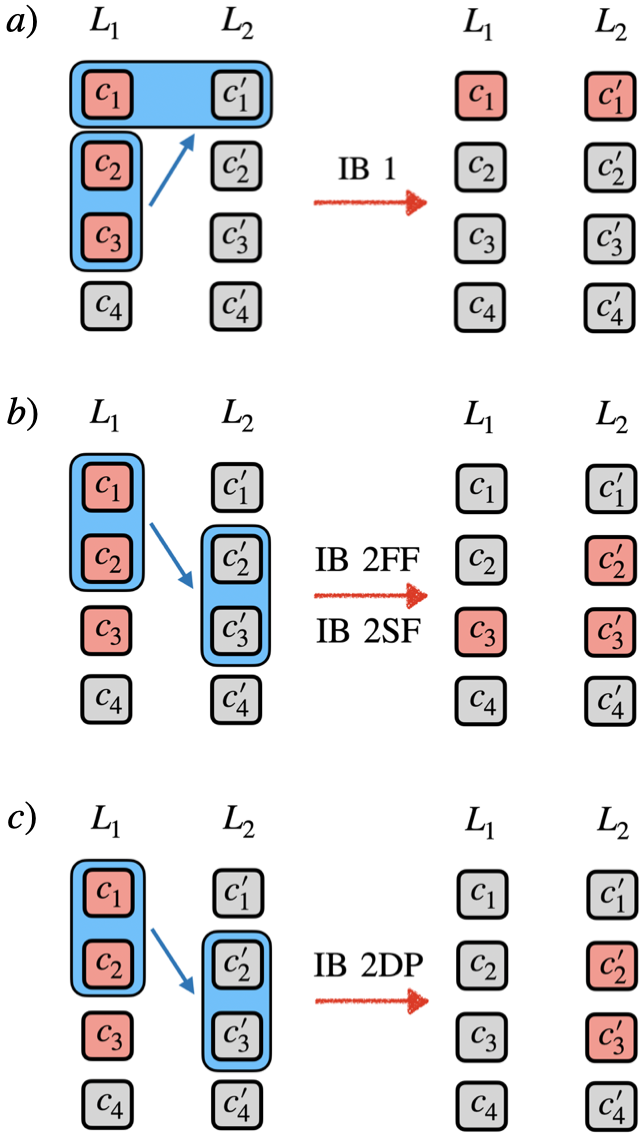}
\caption{An example illustrating random walk steps under different inductive biases, compare with Example \ref{Example:InductiveBiases}. Excited atomic clips are shown in red. The sampled hyperedge is shown in blue. Subfigure a) shows a deliberation step which is only allowed under Inductive Bias \ref{Bias:ManyBody}, because its codomain is in two layers. Also, it shows that an excitation moving into an occupied atomic clip gets discarded. Subfigure b) shows a typical transition under Inductive Biases \ref{Bias:Layered} and 2SF. Notice that it can lead to excitations being spread over several layers. Subfigure c) shows a typical transition under Inductive Bias 2DP. It discards uninvolved excitations, and therefore only contains excitations in the codomain of the hyperedge. }

\label{Figure:InductiveBias}    
\end{figure}

\begin{example}[Example~\ref{Example:InductiveBiases} including Inductive Biases 2SF and 2DP]
Consider a simple 2-layer setting, with 4 atomic clips in each layer, see Figure \ref{Figure:InductiveBias}: $V=L_1 \cup L_2$, with $L_1 = \{c_1 , c_2, c_3, c_4\}$ and $L_2 = \{c'_1, c'_2, c'_3, c'_4\}$. We only consider h-values with the same number of incoming and outgoing excitation numbers, and let no more than two excitations interact. That means $IO = \{(1,1), (2,2)\}$. Our current excitation configuration is $\{ c_1, c_2, c_3 \}$, meaning that we currently have an excitation in each of the atomic clips $c_1$, $c_2$, and $c_3$. 

With the weakest of the inductive biases, i.e. Inductive Bias \ref{Bias:ManyBody}, and choosing $E^{(i,o)} = E_{\mathrm{all}}^{(i,o)}$, our list $\mathcal H_{\mathrm{relevant}}$ of currently relevant h-values is:
\begin{enumerate}
    \item $h_{(2,2)}(\{c_m, c_n\}, \{c'_j, c'_k\})$ such that $j,k \in \{1,2,3,4\}$, $j < k$ and $m,n \in \{1,2,3\}$, $m < n$ 
    \item $h_{(2,2)}(\{c_m, c_n\}, \{c_j, c_k\})$ such that $j,k \in \{1,2,3,4\}$, $j < k$ and $m,n \in \{1,2,3\}$, $m < n$, and $\{j,k\} \ne \{m,n \}$
    \item $h_{(2,2)}(\{c_m, c_n\}, \{c_j, c'_k\})$ such that $j,k \in \{1,2,3,4\}$ and $m,n \in \{1,2,3\}$, $m < n$
    \item $h_{(1,1)}(c_m, c'_j)$ such that $j \in \{1,2,3,4\}$ and $m \in \{1,2,3\}$
    \item $h_{(1,1)}(c_m, c_j)$ such that $j \in \{1,2,3,4\}$, and $m \in \{1,2,3\}$, and $j \ne m$
\end{enumerate}
This list gets turned into probabilities, in our example by applying the softmax-function to the full list. Say, we sample $h_{(2,2)}(\{c_2, c_3\}, \{c_1, c'_1\})$ and apply it to our current configuration $\{c_1, c_2, c_3\}$. First, we remove the excitations in $c_2$ and $c_3$, giving us the configuration $\{c_1\}$. Next, we put excitations into $c_1$ and $c'_1$. However, $c_1$ already carries an excitation. We just keep this excitation as it is. So our next excitation configuration is $\{c_1, c'_1\}$. Note that our rule for dealing with already occupied atomic clips led to a reduction in the total number of excitations.

Our layered Inductive Biases \ref{Bias:Layered} and 2SF differ from the previous situation in that the relevant many-body h-values are only items 1 and 4 from the numbered list above. Now, say that we sampled $h_{(2,2)}(\{c_1, c_2\}, \{c'_2, c'_3 \})$ and apply it to our current configuration $\{c_1, c_2, c_3\}$. First, we remove the excitations in $c_1$ and $c_2$, giving us the configuration $\{c_3\}$. Next, we insert excitations in $c'_2, c'_3$, giving us the full next excitation configuration $\{c'_2, c'_3, c_3\}$. We observe that while the feed-forward condition forces all excitations that move to move one layer forward, it allows excitations to stay behind in their old atomic clip in the old layer. Consider now an additional layer $L_3$. While Inductive Bias \ref{Bias:Layered} allows us to continue with any transition $C_{\mathrm{in}} \rightarrow C_{\mathrm{out}}$ that has $C_{\mathrm{in}} \subset \{c'_2, c'_3\}$ or $C_{\mathrm{in}} = \{c_3\}$, Inductive Bias 2SF forces us to remove $c_3$ first. For Inductive Bias 2SF, the relevant many-body h-values are therefore just those of the form $h_{(1,1)}(c_3, c'_j)$ with $j=1,\dots 4$.  

Now, consider Inductive Bias 2DP acting on $\{c_1, c_2, c_3\}$. While it has the same list of relevant $h$-values as Inductive Bias \ref{Bias:Layered}, it applies the transitions another way. Again, assume that we sampled $h_{(2,2)}(\{c_1, c_2\}, \{c'_2, c'_3 \})$ and apply it to $\{c_1, c_2, c_3\}$. Again, we remove the ingoing excitations $c_1$ and $c_2$, giving us $\{c_3\}$. Next, we insert excitations in $c'_2$ and $c'_3$, giving us $\{c'_2, c'_3, c_3\}$. Furthermore, $c_3$ is neither an ingoing nor an outgoing atomic clip of $h_{(2,2)}(\{c_1, c_2\}, \{c'_2, c'_3 \})$, so we discard its excitation. This gives us the full next excitation configuration $\{c'_2, c'_3\}$. As we see, Inductive Bias 2DP enforces that after a transition, all excitations are found in the same layer.
\end{example}

When modeling agents with Inductive Bias 2SF, there is one important consequence to keep in mind: In human decision-making, a common theme is to write down some intermediate results, and only use them much later when they are deemed relevant. An example would be the derivation of several independent lemmas, all of which get used in proving a theorem. Since in Inductive Bias 2SF the shallowest excitations are removed first, one should introduce copies of their atomic clips in deeper layers to not lose the knowledge/concepts they represent in later steps. 

Inductive Bias 2DP forces all excitations to move forward, and discards those that failed to do so. This models agents with a short attention span who forget everything that is not immediately relevant.

\section{The Relation Between Hypergraphs} \label{Appendix:RelationHypergraphs}
In the main text, we have introduced two different concepts of hypergraphs. The (weighted) ECM which uses standard $h$-values $h$, and the many-body hypergraph which uses many-body $h$-values $h_{(i,o)}$. In this appendix, we work out the relation between the two by constructing the standard $h$-values from the $h_{(i,o)}$ for Inductive Bias \ref{Bias:ManyBody}, under the standard probability assignment.

Consider any two excitation configurations $C_{\mathrm{in}} = \{c_{j_1}, \dots c_{j_i} \}$ and $C_{\mathrm{out}} = \{c_{k_1}, \dots c_{k_o} \}$. Given the weighted many-body hypergraph, we set

\begin{align}
    &h(C_{\mathrm{in}}, C_{\mathrm{out}}) = \label{Equation:RelationHypergraphs}\\
    &\sum_{ \substack{(i,o) \in IO,\ (C_{\mathrm{in}}^{(i)} \rightarrow C_{\mathrm{out}}^{(o)}) \in E^{(i,o)} \\ \text{such that } C_{\mathrm{in}}^{(i)} \subset C_{\mathrm{in}},\ C_{\mathrm{out}} = (C_{\mathrm{in}} \setminus C_{\mathrm{in}}^{(i)}) \cup C_{\mathrm{out}}^{(o)} }} h_{(i,o)}(C_{\mathrm{in}}^{(i)}, C_{\mathrm{out}}^{(o)}) \nonumber
\end{align}

for each $e = (C_{\mathrm{in}} \rightarrow C_{\mathrm{out}})$ that has at least one summand in Eq. \eqref{Equation:RelationHypergraphs}. The set $E$ of standard hyperedges is then the union of all such $e$. If instead for a $(C_{\mathrm{in}} \rightarrow C_{\mathrm{out}})$ the sum has no summands, it is not an allowed hyperedge, and we do not associate a standard $h$-value with it. 

We remind that $|C_{\mathrm{out}}^{(o)}| = o$, $|C_{\mathrm{in}}^{(i)}| = i$, and $C_{\mathrm{out}}^{(o)} \ne C_{\mathrm{in}}^{(i)}$ are conditions already required by elements of $E^{(i,o)}$.

Now, let us prove that Eq. \eqref{Equation:RelationHypergraphs} leads to the same probabilities for transitions between excitation configurations for both $h$ and $h_{(i,o)}$. For this proof, we will have to assume that the standard PS probability assignment is used, i.e. $p_j = \frac{h_j}{ \sum_k h_k}$, both for the ECM and the many-body hypergraph. We will use the transition sampling rules of Inductive Bias \ref{Bias:ManyBody}, and show that they correspond to the standard transition sampling rule using the $h$-values of Eq. \eqref{Equation:RelationHypergraphs}. In other words, under Inductive Bias \ref{Bias:ManyBody}, the standard probability assignment, and Equation \eqref{Equation:RelationHypergraphs}, we show that the standard MEPS agent and the many-body MEPS agent have the same probabilities for all random walks. This means they are equivalent during inference.

Within Inductive Bias \ref{Bias:ManyBody}, we sample a transition using the $h_{(i,o)} (C_{\mathrm{in}}^{(i)}, C_{\mathrm{out}}^{(o)})$. Applying the sampled transition to an  excitation configuration $C_{\mathrm{in}}$ gives the next excitation configuration $C_{\mathrm{out}} := (C_{\mathrm{in}} \setminus C_{\mathrm{in}}^{(i)}) \cup C_{\mathrm{out}}^{(o)}$. 

We group different $h_{(i,o)} (C_{\mathrm{in}}^{(i)}, C_{\mathrm{out}}^{(o)})$ together which result in the same $C_{\mathrm{out}}$. Getting a particular $C_{\mathrm{out}}$ has the probability
\begin{align}
    &p(C_{\mathrm{out}} | C_{\mathrm{in}}) = \label{Equation:RelationProbsGraphs}\\
    &\sum_{ \substack{(i,o) \in IO,\ (C_{\mathrm{in}}^{(i)} \rightarrow C_{\mathrm{out}}^{(o)}) \in E^{(i,o)} \\ \text{such that } C_{\mathrm{in}}^{(i)} \subset C_{\mathrm{in}},\ C_{\mathrm{out}} = (C_{\mathrm{in}} \setminus C_{\mathrm{in}}^{(i)}) \cup C_{\mathrm{out}}^{(o)} }} p(C_{\mathrm{in}}^{(i)} \rightarrow C_{\mathrm{out}}^{(o)}) \nonumber
\end{align}
The first line in the sum of Eq. \eqref{Equation:RelationProbsGraphs} just says ``only consider transitions that are in the set of allowed transitions $E^{(i,o)}$''. This is the same as the first line in Eq. \eqref{Equation:RelationHypergraphs}, and does not yet take into account which transitions are applicable to the full configuration $C_{\mathrm{in}}$.   

The second line in the sum of Eq. \eqref{Equation:RelationProbsGraphs} expresses the fact that applying $h_{(i,o)} (C_{\mathrm{in}}^{(i)}, C_{\mathrm{out}}^{(o)})$ to $C_{\mathrm{in}}$ within Inductive Bias \ref{Bias:ManyBody} is allowed only if $C_{\mathrm{in}}^{(i)} \subset C_{\mathrm{in}}$, and if so it proceeds by removing the excitations in $C_{\mathrm{in}}^{(i)}$ from $C_{\mathrm{in}}$, and then adds the excitations $C_{\mathrm{out}}^{(o)}$. This process has to result in $C^{\mathrm{out}}$.

Now, we need to relate the probabilities to the $h_{(i,o)}$-values under the standard probability assignment rule. For the normalization, we have to consider all output configurations, giving a normalization 
\begin{align}
    \mathcal N = \sum_{ \substack{(i',o') \in IO,\\ (D_{\mathrm{in}}^{(i')}\rightarrow D_{\mathrm{out}}^{(o')}) \in E^{(i',o')} \\ \text{such that } D_{\mathrm{in}}^{(i')} \subset C_{\mathrm{in}} }} h_{(i',o')}(D_{\mathrm{in}}^{(i')}, D_{\mathrm{out}}^{(o')}) \label{Equation:NormalizationAppendix}
\end{align}
resulting in
\begin{align}
    & p( C_{\mathrm{in}}^{(i)} \rightarrow C_{\mathrm{out}}^{(o)}) = \frac{ h_{(i,o)}(C_{\mathrm{in}}^{(i)}, C_{\mathrm{out}}^{(o)}) } { \mathcal N }  \nonumber
\end{align}
With this, Eq. \eqref{Equation:RelationProbsGraphs} takes the form 
\begin{align}
    &p(C_{\mathrm{out}} | C_{\mathrm{in}}) = \label{Equation:RelationGraphsAlmostThere}\\
    &\sum_{ \substack{(i,o) \in IO,\ (C_{\mathrm{in}}^{(i)} \rightarrow C_{\mathrm{out}}^{(o)}) \in E^{(i,o)} \\ \text{such that } C_{\mathrm{in}}^{(i)} \subset C_{\mathrm{in}},\ C_{\mathrm{out}} = (C_{\mathrm{in}} \setminus C_{\mathrm{in}}^{(i)}) \cup C_{\mathrm{out}}^{(o)} }} \frac{h_{(i,o)}(C_{\mathrm{in}}^{(i)}, C_{\mathrm{out}}^{(o)})}{\mathcal N}\nonumber
\end{align}
Now, under the definition in Eq. \eqref{Equation:RelationHypergraphs}, Eq. \eqref{Equation:RelationGraphsAlmostThere} can be rewritten as $p(C_{\mathrm{out}} | C_{\mathrm{in}}) = \frac{h(C_{\mathrm{in}}, C_{\mathrm{out}})}{\mathcal N}$, which is compatible with the standard probability assignment of the standard ECM. As a last consistency check, we point out that the normalization $\mathcal N$ in Eq. \eqref{Equation:NormalizationAppendix} can be rewritten as:
\begin{align}
    &\mathcal N = \sum_{C_{\mathrm{out}}} \nonumber \\ 
    &\sum_{ \substack{(i',o') \in IO,\ (D_{\mathrm{in}}^{(i')}\rightarrow D_{\mathrm{out}}^{(o')}) \in E^{(i',o')} \\ \text{such that } D_{\mathrm{in}}^{(i')} \subset C_{\mathrm{in}},\ C_{\mathrm{out}} = (C_{\mathrm{in}} \setminus D_{\mathrm{in}}^{(i')}) \cup D_{\mathrm{out}}^{(o')} }} h_{(i',o')}(D_{\mathrm{in}}^{(i')}, D_{\mathrm{out}}^{(o')}) \nonumber \\
    & = \sum_{C_{\mathrm{out}}} h(C_{\mathrm{in}}, C_{\mathrm{out}}) \label{Equation:NormalizationRewritten}
\end{align}
Here, if the condition in the second sum cannot be satisfied, we use the convention that that sum is zero. Eq. \eqref{Equation:NormalizationRewritten} differs from Eq. \eqref{Equation:NormalizationAppendix} merely by explicitly spelling out that applying $h_{(i,o)}$ to a configuration $C_{\mathrm{in}}$ will always have to result in a reachable configuration $C_{\mathrm{out}}$. This concludes the proof.

It is important to emphasize that the standard MEPS agent and the many-body MEPS agent are \textbf{NOT equivalent during learning}. The many-body agent will reinforce $h_{(i,o)}$, which according to Equation \eqref{Equation:RelationHypergraphs} will in general change several $h$, even $h$ for transitions that were not performed in the random walk.

We point out that the proof can be adapted to the Inductive Biases \ref{Bias:Layered}, 2SF and 2DP:
\begin{itemize}
    \item Inductive Bias \ref{Bias:Layered} just restricts the set $E^{(i,o)}$, therefore the same proof applies.

    \item Inductive Bias 2SF puts the additional restriction that shallow excitations must move first. Which excitations are the shallowest only depends on $C_{\mathrm{in}}$, and this requirement can be written as an extra condition into all the sums of the proof. Other than that, the proof stays unchanged.

    \item Inductive Bias 2DP replaces the condition $C_{\mathrm{out}} = (C_{\mathrm{in}} \setminus C_{\mathrm{in}}^{(i)}) \cup C_{\mathrm{out}}^{(o)}$ with $C_{\mathrm{out}} = C^{(o)}_{\mathrm{out}}$. Other than that, the proof is unchanged.

\end{itemize}

\section{Detailed Complexity Results and Proofs} \label{Appendix:Complexity}

So far, our resource estimates do not distinguish between our inductive biases. However, we already hinted at the fact that they have crucial complexity differences with regard to the maximal deliberation time. To see the difference, we derive bounds on the deliberation time. 

At first, we point out that Inductive Bias \ref{Bias:ManyBody} still allows for deliberations to be arbitrarily long: For any attainable excitation configuration $\{ c_{m_1}, \dots , c_{m_x} \}$, if it is possible to combine transitions relevant for this configuration to a cycle leading again to $\{ c_{m_1}, \dots , c_{m_x} \}$, then this cycle can lead to arbitrarily long deliberation times. A simple class of Inductive Bias \ref{Bias:ManyBody} agents that have such cycles is presented in the following proposition:

\begin{proposition}\label{Proposition:Cycles}
    Consider a many-body MEPS agent conforming to Inductive Bias~\ref{Bias:ManyBody}. Assume for all $(i,o) \in IO$ that $E_{\mathrm{all}}^{(i,o)} = E^{(i,o)}$ and $IO = \{ 1, 2, \dots , k\}^{\times 2}$. Then for all $n\in \mathbb N$, there exists a deliberation chain/random walk taking more than $n$ steps.
\end{proposition}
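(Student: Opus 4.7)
The approach is to exhibit an explicit length-$2$ cycle in the many-body hypergraph that can be traversed arbitrarily many times. First I would observe that the hypothesis $IO = \{1,2,\ldots,k\}^{\times 2}$ contains the pair $(1,1)$, and that $E^{(1,1)} = E^{(1,1)}_{\mathrm{all}}$ means all single-excitation transitions $\{c_j\} \rightarrow \{c_\ell\}$ with $j \ne \ell$ are available. I would also note the trivial edge case: for $|V| = 1$ no hyperedge exists at all under Bias~\ref{Bias:ManyBody} (since the defining condition $C^{(i)}_{\mathrm{in}} \ne C^{(o)}_{\mathrm{out}}$ of $E^{(i,o)}_{\mathrm{all}}$ forces $|V|\ge 2$ whenever any transition is to be defined), so the non-trivial content is the $|V| \ge 2$ case.

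Next I would pick two distinct atomic clips $c_1, c_2 \in V$ and single out the two many-body $h$-values $h_{(1,1)}(\{c_1\}, \{c_2\})$ and $h_{(1,1)}(\{c_2\}, \{c_1\})$, both of which exist by the previous paragraph. Applying the Bias~\ref{Bias:ManyBody} dynamics starting from $C_{\mathrm{now}} = \{c_1\}$: the first transition removes the excitation on $c_1$ and places one on $c_2$, yielding $\{c_2\}$; the second transition then maps $\{c_2\}$ back to $\{c_1\}$. This gives a closed cycle $\{c_1\} \rightarrow \{c_2\} \rightarrow \{c_1\} \rightarrow \cdots$ of period two, along which no occupied-clip collisions occur, so the sampling rules apply in their simplest form.

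The remaining step is to check that each transition on this cycle has strictly positive probability, so that the cycle is actually realized as a valid random walk (not merely as an abstract path in the hypergraph). Under the softmax probability rule this is automatic, since $\exp(\beta h) > 0$. Under the standard probability rule it follows from the assumption $h_{\min} \ge 0$ together with a positive initialization $h^{(0)}_{(1,1)}$ on the two cycle edges -- which the proposition tacitly allows, since the statement is about the existence of such a walk in principle. Traversing the cycle $\lceil n/2 \rceil + 1$ times then produces a deliberation chain of length strictly greater than $n$, and this chain has strictly positive sampling probability.

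The main obstacle is essentially bookkeeping: ensuring that the sampled $h$-values stay positive along the cycle and clarifying the trivial $|V|\le 1$ case. The substantive content -- the construction of a closed cycle -- is immediate from the hypothesis that $E^{(1,1)}_{\mathrm{all}} = E^{(1,1)}$, which guarantees reversibility of every one-body transition.
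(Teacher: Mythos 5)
Your proposal is correct and takes essentially the same approach as the paper: the paper's proof likewise uses the pair $h_{(1,1)}(c', c_{m_1})$ and $h_{(1,1)}(c_{m_1}, c')$ to oscillate between two configurations arbitrarily many times. Your additional checks (positivity of the sampling probabilities and the trivial $|V|\le 1$ case) are careful but not part of the paper's one-line argument.
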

\begin{proof}
     Let $\{ c_{m_1}, \dots , c_{m_x}\}$ be any excitation configuration. Then we can use $h_{(1,1)}(c', c_{m_1})$ and $h_{(1,1)}(c_{m_1}, c')$ for any atomic clip $c'$ to move back and forth between $\{ c_{m_1}, \dots , c_{m_x}\}$ and $\{ c', c_{m_2}, \dots , c_{m_x}\}$ arbitrarily many times.
\end{proof}

One important consequence of our feed-forward conditions is that they explicitly break such cycles. In that regard, we first establish that our Inductive Bias \ref{Bias:Layered} indeed leads to a finite upper bound on the deliberation time:

\begin{proposition}\label{Proposition:UpperBoundArbitraryFirst}
    Assume Inductive Bias \ref{Bias:Layered} for a layered many-body MEPS agent with layers $(L_1,\dots , L_D)$. Then the deliberation time (i.e. the total number of random walk steps) is upper-bounded by $ \prod_{j=1}^{D-1} (|L_j|+1)$.
\end{proposition}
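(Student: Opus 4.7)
The plan is to construct a non-negative integer-valued potential function $\Phi$ on excitation configurations that strictly decreases by at least one at every random walk step, and then bound the walk length by the initial value of $\Phi$. Since Inductive Bias~\ref{Bias:Layered} forces excitations to flow strictly from shallower to deeper layers, a mixed-radix-style encoding that weights shallower layers more heavily than deeper ones suggests itself naturally.

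Concretely, I would represent the current state by the occupation tuple $(s_1, \dots, s_{D-1})$ with $s_\ell \subset L_\ell$, ignoring $L_D$ since excitations there can no longer participate in any transition. Then I would define
\begin{equation*}
\Phi(s) \;=\; \sum_{\ell=1}^{D-1} |s_\ell|\, w_\ell, \qquad w_\ell \;=\; \prod_{k=\ell+1}^{D-1}\bigl(|L_k|+1\bigr),
\end{equation*}
with the empty product convention $w_{D-1}=1$. The key algebraic identity is $w_\ell = (|L_{\ell+1}|+1)\,w_{\ell+1}$, which immediately yields $w_\ell > |L_{\ell+1}|\,w_{\ell+1}$.

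To verify the strict decrease, I would consider a step applying a hyperedge $C_{\mathrm{in}} \to C_{\mathrm{out}}$ with $C_{\mathrm{in}} \subset L_\ell \cap s_\ell$ and $C_{\mathrm{out}} \subset L_{\ell+1}$: the state updates to $s_\ell \setminus C_{\mathrm{in}}$ in layer $\ell$ and $s_{\ell+1} \cup C_{\mathrm{out}}$ in layer $\ell+1$, with all other layers unchanged. For $\ell < D-1$, using $|C_{\mathrm{in}}| \ge 1$ and $|C_{\mathrm{out}} \setminus s_{\ell+1}| \le |L_{\ell+1}|$ gives
\begin{equation*}
\Delta\Phi \;\le\; -w_\ell + |L_{\ell+1}|\,w_{\ell+1} \;=\; -w_{\ell+1} \;\le\; -1\,,
\end{equation*}
and for $\ell = D-1$ the new excitations in $L_D$ contribute nothing to $\Phi$, so $\Delta\Phi = -|C_{\mathrm{in}}| \le -1$ directly.

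Finally, $\Phi$ is a sum of non-negative integers, so $\Phi \ge 0$, and its maximum over all configurations is $\sum_{\ell=1}^{D-1}|L_\ell|\,w_\ell = \prod_{j=1}^{D-1}(|L_j|+1) - 1$ by the standard telescoping identity for mixed-radix representations. Combined with a drop of at least one per step, the total number of steps is bounded by $\prod_{j=1}^{D-1}(|L_j|+1)$, as required. The only conceptual hurdle is recognizing that a single interaction can trigger an \emph{avalanche} of up to $|L_{\ell+1}|$ new excitations one layer deeper (this is exactly the obstruction highlighted by Proposition~\ref{Proposition:DeepToShallowIsTight}), so the weights $w_\ell$ must be chosen large enough to strictly dominate any such avalanche; the recursion $w_\ell = (|L_{\ell+1}|+1)\,w_{\ell+1}$ is precisely the minimal one that achieves this, which is why the resulting bound takes the product form stated.
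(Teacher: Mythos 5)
Your proof is correct, and it takes a genuinely different route from the paper's. The paper proves the bound by induction on the depth $D$: it splits the ECM into a front segment $(L_1,\dots,L_{D-1})$ and a back segment $(L_{D-1},L_D)$, argues that the worst-case walk empties the deepest non-final layer first via $(1,|L_j|)$-avalanches, and combines the induction hypothesis with up to $|L_{D-1}|$ back-segment steps per front-segment step to get $\prod_{j=1}^{D-2}(|L_j|+1)+|L_{D-1}|\prod_{j=1}^{D-2}(|L_j|+1)=\prod_{j=1}^{D-1}(|L_j|+1)$. You instead exhibit a monovariant: the mixed-radix potential $\Phi(s)=\sum_\ell |s_\ell| w_\ell$ with $w_\ell=(|L_{\ell+1}|+1)w_{\ell+1}$ strictly decreases by at least one per step, because the loss $w_\ell$ from removing at least one domain excitation dominates the worst-case gain $|L_{\ell+1}|\,w_{\ell+1}$ from an avalanche into the next layer (and your accounting via $|C_{\mathrm{out}}\setminus s_{\ell+1}|$ correctly handles the discard-on-occupied rule). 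What your approach buys: it is uniform over all choices of $IO$ and $E^{(i,o)}$ consistent with the feed-forward condition, so it avoids the paper's somewhat informal identification of which walk is worst; it is non-inductive and self-contained; and the telescoping maximum $\sum_\ell |L_\ell|w_\ell=\prod_{j=1}^{D-1}(|L_j|+1)-1$ even shaves one off the stated bound. What the paper's argument buys in exchange is structural insight into how the exponential worst case is actually realized (deep-to-shallow removal order), which it then exploits directly in Proposition~\ref{Proposition:DeepToShallowIsTight} to show the bound is essentially tight.
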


\begin{proof}
    To get to the worst-case scaling, we assume that actions are only coupled out when only the final layer has excitations, not earlier.

    The transitions that remove the least excitations while creating the most (that then have to be removed one by one) are those with $(i,o) = (1, |L_j|)$.

    Therefore, for the worst case, we assume that for all $j = 2, \dots, D$, all $(1,|L_j|) \in IO$. Also, for all $j$ we require that $E^{(1,|L_j|)}_{\mathrm{all}} = E^{(1,|L_j|)}$. 

    We perform a proof by induction in the number of layers $D$. First, we consider the case $D = 2$. Here, there is only one non-final layer. The slowest way to remove the excitations in layer $L_1$ is by removing them one-by-one, which can be done with $(1,|L_2|)$-transitions. This takes $|L_1|$-many steps.

    Now, assume that the claim is true for all layered many-body hypergraphs that have up to $D-1$ layers and that we have $D$ layers. 

    We decompose our agent into two segments: Segment 1 is $(L_1, \ldots, L_{D-1})$, and Segment 2 is $(L_{D-1}, L_D)$. Here, layer $L_{D-1}$ effectively plays the role of the final layer in Segment 1, and we will refer to it as such.

    First, we point out that transitions within Segment 2 do not affect the excitations in the non-final layers of Segment 1. In particular, transitions within Segment 2 do not change the number of steps we need to empty the non-final layers of Segment 1. However, transitions within Segment 1 may fill up layer $L_{D-1}$. 
    
    Since transitions within Segment 2 do not change the number of steps needed for Segment 1, we can empty $L_{D-1}$ in Segment 2 every time new excitations arrive, before continuing with emptying Segment 1. This ensures that every time Segment 1 moves excitations into Segment 2, the atomic clips are empty and no excitations are discarded in layer $L_{D-1}$. If we did not always empty layer $L_{D-1}$ first, it would not affect the number of steps needed to empty Segment 1, but it may reduce the total number of transitions within Segment 2 because moving a new excitation into an already excited atomic clip just discards the excitation. That is one less excitation to remove one-by-one. This argument shows that the worst random walk empties the deepest non-final layer first. 

    An upper bound on the worst case is given by the assumption that every step within Segment 1 fills up all of layer $L_{D-1}$. The slowest method to empty layer $L_{D-1}$ needs $|L_{D-1}|$ steps and can use $(1,|L_D|)$-transitions for doing so. 

    By the induction hypothesis, an upper bound on the number of transitions within Segment 1 emptying Segment 1 is provided by $\prod_{j=1}^{D-2} (|L_j|+1)$. For the upper bound, after each of these transitions, we use $|L_{D-1}|$ Segment 2 transitions to empty layer $L_{D-1}$. So, the upper bound on the total number of steps is:
    \begin{align*}
        & \prod_{j=1}^{D-2} (|L_j|+1) + |L_{D-1}|\cdot \prod_{j=1}^{D-2} (|L_j|+1) \\ 
        =& \prod_{j=1}^{D-1} (|L_j|+1)
    \end{align*}
    The first product on the left-hand side is the upper bound on the transitions within Segment 1, and the second product expresses that for each transition within Segment 1, we also have up to $|L_{D-1}|$ transitions within Segment 2.
\end{proof}

While the upper bound in Proposition \ref{Proposition:UpperBoundArbitraryFirst} is finite, it is exponential in the depth $D$. However, there exist scenarios in which there is also an exponential lower bound on the maximal deliberation time, which is the content of the following proposition.
    
\begin{proposition}\label{Proposition:DeepToShallowIsTight}
    Consider a MEPS agent obeying Inductive Bias \ref{Bias:Layered} with layers $L_1,\dots, L_D$. 
    For any $o \le |L_2|, \dots , |L_D|$ with $o \ge 2$, assume that $(1,o) \in IO$ and let $E^{(1,o)}_{\mathrm{all}} = E^{(1, o)}$. Assume that actions are coupled out only when only the final layer has excitations and that there exist percept excitation configurations that have at least $o$ excitations in layer $L_1$. 

    A lower bound on the maximal deliberation time is then given by $\prod_{j=1}^{D-1} o = o^{D-1}$. 
\end{proposition}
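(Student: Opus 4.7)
The plan is to exhibit, under the stated hypotheses, a single random walk that takes at least $o^{D-1}$ steps. Since Inductive Bias~\ref{Bias:Layered} requires that an action be coupled out only once every excitation lives in $L_D$, any one such walk certifies the claimed lower bound on the \emph{maximal} deliberation time.

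I start from a percept configuration with at least $o$ excitations in $L_1$ and none in the deeper non-final layers (such a configuration is guaranteed by hypothesis; extra initial excitations can only lengthen the walk), and then run the \emph{deepest-first} strategy: at each step, pick any excitation residing in the deepest currently excited non-final layer and apply to it a $(1,o)$-transition whose $o$ outgoing clips in the next layer are disjoint from the excitations already present there. The assumption $E^{(1,o)}_{\mathrm{all}} = E^{(1,o)}$ makes every such hyperedge available, and $o \le |L_{j+1}|$ guarantees that $o$ empty target clips can always be found at the moment of the call. To count the resulting path length, define $g(j,k)$ as the number of transitions the strategy performs starting from a state with $k$ excitations in $L_j$ and none in $L_{j+1}, \dots, L_{D-1}$ (the contents of $L_D$ are irrelevant for the count). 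Because the deepest-first rule fully clears the $o$-excitation brood created in $L_{j+1}$ before ever touching $L_j$ again, we obtain the recursion
\begin{equation*}
g(j,k) = 1 + g(j+1,\,o) + g(j,\,k-1), \qquad g(D,k) = 0, \quad g(j,0) = 0,
\end{equation*}
which unrolls to $g(j,k) = k \cdot g(j,1)$ and $g(j,1) = \sum_{i=0}^{D-1-j} o^{i}$, so $g(1,o) = \sum_{i=1}^{D-1} o^{i} \geq o^{D-1}$.

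The one subtlety I expect to argue carefully is the overlap phenomenon highlighted in Remark~\ref{Remark:Occupied}: a transition that lands excitations on already occupied clips silently discards the surplus, reducing the excitation count and thereby potentially shortening the remaining work. For intermediate layers this is defused by the invariant that every $L_j$ with $2 \le j \le D-1$ is empty at each moment the strategy is about to inject new excitations into it, because the deepest-first rule only returns to $L_{j-1}$ after $L_j, \dots, L_{D-1}$ have all been cleared. Collisions in $L_D$, by contrast, are harmless for the counting argument, since a transition contributes to the walk length the instant it is applied, irrespective of whether its head overlaps occupants already present in the final layer. Once this point is settled, the recursion above is tight for the constructed walk and the claimed bound $o^{D-1}$ follows.
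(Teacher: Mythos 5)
Your proposal is correct and follows essentially the same route as the paper: both construct the explicit ``deepest-first'' walk and count its length by an induction/recursion over layers, with your $g(j,1)=\sum_{i=0}^{D-1-j}o^i$ being the exact count whose leading term is the paper's $\prod_{k=j}^{D-2}o$. Your explicit treatment of the occupied-clip discard issue (empty intermediate layers as an invariant, harmless collisions in $L_D$) is a welcome extra precaution that the paper's proof leaves implicit, but it does not change the argument.
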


\begin{proof}
    The proof proceeds by induction over the layers, from deep to shallow. For that, we first establish that the cost to remove one excitation from layer $L_{D-2}$ with $(1,o)$-transitions is $o+1$, because removing that excitation in $L_{D-2}$ creates $o$ excitations in $L_{D-1}$, which can be removed with $o$ transitions $(1,o)$. 

    Next, assume that $L_{j+1}$ is the deepest non-final layer with excitations, and that removing an excitation from layer $L_{j+1}$ and emptying all of layers $L_{j+2}$, \dots , $L_{D-1}$ can be done with a sequence of transitions that takes at least $\prod_{k=j+1}^{D-2} o$ steps.

    Let us now consider an excitation in layer $L_j$, and assume layers $L_{j+1},\dots , L_{D-1}$ are empty. Removing it with a $(1,o)$-transition creates $o$ excitations in $L_{j+1}$. To remove each of these new excitations and also to empty the non-final layers ahead, the induction hypothesis claims that there is a random walk that does this with at least $\prod_{k=j+1}^{D-2} o$ steps. We have to do this for all the $o$ excitations, giving us a number of steps $\prod_{k=j}^{D-2} o$.

    Induction therefore shows that removing an excitation in Layer $L_1$ can be done with a random walk that takes at least $\prod_{k=1}^{D-2} o$ steps. If we consider a percept configuration with $o$ excitations in the first layer, we can first empty the deeper layers, and then all the $o$ excitations in layer $L_1$ using at least $o\cdot \prod_{k=1}^{D-2} o$ steps.
\end{proof}

The expression $\prod_{k=j}^{D-1} o $ from the previous proposition shows that in general, an exponential (in $D$) maximal deliberation time is unavoidable. To get this exponential scaling, it is enough to consider examples with $IO = \{(1,2)\}$, despite the fact that this $IO$ only contains small $i$ and $o$. This shows that small values of $i$ and $o$ are not enough to prevent exponentially large random walk lengths.

However, one key point of the proofs is that for all these scenarios, we require $(i,o)$ with $o>i$ such that we can start an ``avalanche'' of excitations. This suggests that, if we choose our inductive bias on $IO$ such that the number of excitations cannot increase, we find a much better upper bound, which is essentially $\mathrm{width} \times \mathrm{depth}^2$:

\begin{proposition}\label{Proposition:NoNewExcitations}
     Consider a layered many-body MEPS agent with layers $(L_1,\dots , L_D)$ pertaining to Inductive Bias \ref{Bias:Layered}. Assume that for all $(i,o) \in IO$ we have $o \le i$. Then the deliberation time is upper-bounded by $(D-1)\cdot \sum_{j=1}^D |L_j|$.
\end{proposition}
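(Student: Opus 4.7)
The plan is a potential-function argument. For any configuration $C \subset V$ define
\begin{equation*}
\Phi(C) = \sum_{c \in C} \big(D - \ell(c)\big),
\end{equation*}
where $\ell: V \to \{1,\ldots,D\}$ is the layer index function from Definition~\ref{Definition:Layers}. Intuitively, $\Phi(C)$ counts the total ``forward distance'' that the currently active excitations still have to traverse before reaching the final layer. Since $\ell(c) \in \{1,\dots,D\}$ we immediately have $\Phi(C)\ge 0$, and the initial configuration $C_0$ satisfies
\begin{equation*}
\Phi(C_0) \;\le\; \sum_{j=1}^{D}(D-j)|L_j| \;\le\; (D-1)\sum_{j=1}^{D}|L_j|.
\end{equation*}

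The key step is to show that $\Phi$ strictly drops by at least one during each deliberation step. Under Inductive Bias~\ref{Bias:Layered}, a step fires some hyperedge $C^{(i)}_{\mathrm{in}} \to C^{(o)}_{\mathrm{out}}$ with $C^{(i)}_{\mathrm{in}} \subset L_\ell$ and $C^{(o)}_{\mathrm{out}} \subset L_{\ell+1}$ for some $\ell \in \{1,\ldots,D-1\}$. Removing the $i$ ingoing excitations from $L_\ell$ contributes $-i(D-\ell)$ to $\Phi$, while inserting at most $o$ excitations into $L_{\ell+1}$ contributes at most $+o(D-\ell-1)$ (strictly less when a target clip is already excited and the duplicate is discarded per Remark~\ref{Remark:Occupied}). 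The passive excitations lie outside both $L_\ell$ and $L_{\ell+1}$ in general, and in any case they are untouched, so they cancel in the difference. Hence
\begin{equation*}
\Delta\Phi \;\le\; o(D-\ell-1) - i(D-\ell) \;=\; (o-i)(D-\ell) - o \;\le\; -o \;\le\; -1,
\end{equation*}
where the first ``$\le$'' uses the hypothesis $o\le i$ together with $D-\ell \ge 1$, and the final ``$\le$'' uses $o\ge 1$ by definition of $IO \subset \mathbb{N}_{>0}^2$.

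Combining the two ingredients, $\Phi$ begins at most at $(D-1)\sum_j|L_j|$, decreases by at least one per step, and stays non-negative, so the random walk must terminate in at most $(D-1)\sum_{j=1}^{D}|L_j|$ steps, yielding the claim. The one subtlety that has to be handled carefully is the passive-excitation bookkeeping from Remark~\ref{Remark:Occupied}: one must verify that only newly inserted excitations (not pre-existing passive ones) can be discarded by collision, so the ``$+o(D-\ell-1)$'' upper bound is genuine and the sign of the passive-excitation contribution to $\Delta\Phi$ is exactly zero. Apart from this observation, the proof reduces to the accounting above.
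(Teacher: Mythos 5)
Your proof is correct, and it is essentially the paper's argument made rigorous: the paper counts informally that there are at most $\sum_{j}|L_j|$ excitations, each needing at most $D-1$ steps to be removed or reach $L_D$, with every sampled transition advancing at least one excitation; your potential $\Phi(C)=\sum_{c\in C}(D-\ell(c))$ is exactly the aggregate of that ``remaining forward distance,'' and your computation $\Delta\Phi\le (o-i)(D-\ell)-o\le -1$ is the precise monovariant version of ``at least one excitation is removed or moved forward per step.'' The careful handling of discarded duplicate excitations (Remark~\ref{Remark:Occupied}) only tightens the bound, so the argument goes through.
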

\begin{proof}
    By definition of Inductive Bias \ref{Bias:Layered} (modifying Inductive Bias \ref{Bias:ManyBody}) all deliberations end when all excitations are in layer $L_D$, or earlier. The number of excitations cannot increase. Each time a transition is sampled, at least one excitation is removed or moved to the next layer. Moving an excitation to the final layer takes at most $D-1$ steps. There are at most $\sum_{j=1}^D |L_j|$ excitations, each of which requires no more than $D-1$ steps to be removed or moved to the final layer. In total, we require no more than $(D-1)\times \sum_{j=1}^D |L_j|$-steps.
\end{proof}

Furthermore, the proofs of the exponential lower bound on the worst-case deliberation times required that one first has to remove excitations from the deepest layers. This observation motivated us to introduce Inductive Bias 2SF, which enforces the opposite order, i.e. that shallow excitations need to be removed first. We now argue that this change leads to upper bounds in the maximal deliberation time that are linear in the width and depth of the layered ECM:

\begin{proposition} \label{Proposition:UpperBoundLastFirst}
    Consider a many-body MEPS agent with layers $L_1, \dots , L_D$ obeying Inductive Bias 2SF. Then the maximal deliberation time is upper-bounded by $\sum_{j=1}^{D-1} |L_j| $
\end{proposition}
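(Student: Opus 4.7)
The plan is to exploit the fact that Inductive Bias 2SF forces random walks to process the layers strictly in order from shallow to deep. Let $\ell^{(n)} \in \{1,\dots,D\}$ denote the index of the shallowest layer containing at least one excitation at step $n$ of the random walk. I will first show that $\ell^{(n)}$ is non-decreasing in $n$, then bound the number of consecutive steps during which $\ell^{(n)}$ equals any fixed value $\ell$.

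First I would establish monotonicity of $\ell^{(n)}$. Under Inductive Bias 2SF, the input $C^{(i)}_{\mathrm{in}}$ of the sampled hyperedge is required to lie entirely in layer $L_{\ell^{(n)}}$. By the feed-forward property inherited from Inductive Bias~\ref{Bias:Layered}, the output $C^{(o)}_{\mathrm{out}}$ of this hyperedge then lies in $L_{\ell^{(n)}+1}$. All other (passive) excitations are left untouched. Thus no step ever places an excitation in a layer shallower than $\ell^{(n)}$, so $\ell^{(n+1)} \ge \ell^{(n)}$.

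Next I would bound, for each fixed $\ell \in \{1,\dots,D-1\}$, the number $N_\ell$ of steps with $\ell^{(n)} = \ell$. During this block of steps, all layers $L_1,\dots,L_{\ell-1}$ are empty by definition of $\ell^{(n)}$, and by the feed-forward condition the only way to add an excitation to $L_\ell$ would be to fire a hyperedge whose input sits in $L_{\ell-1}$; since $L_{\ell-1}$ has no excitation available, no such transition can happen. Hence the population of $L_\ell$ is non-increasing while $\ell^{(n)} = \ell$. Each step removes $i \ge 1$ excitations from $L_\ell$, and $L_\ell$ holds at most $|L_\ell|$ excitations (at most one per atomic clip, by the convention from Remark~\ref{Remark:Occupied}). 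Therefore $N_\ell \le |L_\ell|$.

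Finally, since the random walk terminates (couples out an action) as soon as all excitations sit in $L_D$, the walk can contribute no further steps once $\ell^{(n)} = D$. Combining the blocks over $\ell = 1,\dots,D-1$ using monotonicity, the total number of steps is at most $\sum_{\ell=1}^{D-1} N_\ell \le \sum_{\ell=1}^{D-1} |L_\ell|$, as claimed. The only subtle point — and thus the main thing to write out carefully — is the ``no back-filling'' observation in the second paragraph, which crucially uses both the feed-forward condition and the shallowest-first rule together; without either ingredient, excitations could re-enter $L_\ell$ after it had partially drained, and the linear bound would fail.
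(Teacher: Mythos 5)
Your proof is correct and follows essentially the same route as the paper's: the shallowest occupied layer index is non-decreasing, each step while it equals $\ell$ removes at least one excitation from $L_\ell$ and cannot back-fill it (by the feed-forward condition combined with the shallowest-first rule), so each layer contributes at most $|L_\ell|$ steps. Your version merely makes explicit the monotonicity and no-back-filling claims that the paper's terser proof states as ``only excitations in deeper layers can be created.''
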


\begin{proof}
    Again, all random walks end at the latest when all excitations arrive in the last layer. According to Inductive Bias 2SF, we empty the layers going from shallow to deep. 
    Now, assume that layer $L_j$ is the shallowest layer that has excitations. At most $|L_j|$ transitions are needed to empty this layer (each transition removes at least one excitation from $L_j$), and only excitations in deeper layers can be created. In total, this gives $\sum_{j=1}^{D-1} |L_j| $ deliberation steps.
\end{proof}

As we see, we reduced the complexity from exponential to linear scaling with the depth $D$, getting a scaling that is essentially $\mathrm{width} \times \mathrm{depth}$.

Now we consider the harshest of our inductive biases, i.e. Inductive Bias 2DP. For it, we find:

\begin{proposition}
    Consider a layered many-body MEPS agent with layers $(L_1,\dots L_D)$ obeying Inductive Bias 2DP. Then the maximal number of deliberation steps is upper-bounded by $(D-1)$.    
\end{proposition}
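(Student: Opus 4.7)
The plan is to trace the layer structure of the excitation configuration step by step. The combination of the feed-forward condition from Inductive Bias~\ref{Bias:Layered} with the passive-discard mechanism of Inductive Bias 2DP has a very clean consequence: immediately after every transition, every surviving excitation lies in a single layer, and that layer is strictly deeper than the one the sampled transition's ingoing excitations came from. Given this, reaching the terminating condition (all excitations in $L_D$ or earlier) from the percept layer takes at most $D-1$ transitions.

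First I would establish the key one-step observation. By the feed-forward condition, any sampled hyperedge $h_{(i,o)}(C^{(i)}_{\mathrm{in}}, C^{(o)}_{\mathrm{out}})$ satisfies $C^{(i)}_{\mathrm{in}} \subset L_\ell$ and $C^{(o)}_{\mathrm{out}} \subset L_{\ell+1}$ for some $\ell \in \{1,\dots,D-1\}$. Inductive Bias 2DP then replaces the current configuration \emph{entirely} by $C^{(o)}_{\mathrm{out}}$, discarding all passive excitations. Hence the post-step configuration is contained in the single layer $L_{\ell+1}$.

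Next I would set up an induction on the number of random walk steps. Assuming the initial percept coupling places all excitations in $L_1$, write $\ell_k$ for the (unique) layer containing all excitations immediately before step $k$, with $\ell_1 = 1$. By the one-step observation, $\ell_{k+1} = \ell_k + 1$, so $\ell_k = k$. The agent's run terminates the moment all excitations lie in $L_D$ (or earlier), which by the above occurs no later than step $k$ with $\ell_k = D$, i.e.\ after $k-1 = D-1$ transitions.

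The only real subtlety is justifying the argument when the percept coupling happens to inject excitations into more than one layer at the start. The same reasoning actually tightens the bound in that case: the very first sampled transition still picks some $\ell$, and Inductive Bias 2DP immediately purges all excitations outside $L_{\ell+1}$, after which the induction above applies unchanged with a smaller remaining depth. Thus the worst case is genuinely the one where percepts live in $L_1$ alone, and the upper bound $D-1$ is attained exactly there.
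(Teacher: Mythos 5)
Your proof is correct and follows essentially the same route as the paper, whose own argument is the one-line observation that excitations can only move forward and passive excitations are discarded; you have simply spelled this out as a clean induction on the layer index. The extra care about percepts injected into multiple layers is a nice touch but does not change the approach.
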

\begin{proof}
    Follows from the fact that excitations can only move forward, and the discarding of excitations that are left behind.
\end{proof}

Inductive Bias 2DP can be interpreted as describing agents who only remember the conclusions of their latest thought, and forget all the thoughts that happened before in the deliberation. We see that while such agents are very restricted in their short-term memory, they also have the lowest guaranteed deliberation time, scaling only with the depth but not with the width of the ECM. 

In Proposition \ref{Proposition:Bias1} we showed that the total number of trainable parameters is polynomial in the number of atomic clips. As a consequence, also the number of transitions that we need to consider in each deliberation step scales polynomially. However, do we need to consider all trainable parameters for sampling a transition? The following proposition gives a tighter bound:
\begin{proposition}
    Consider an ECM obeying Inductive Bias \ref{Bias:ManyBody}, \ref{Bias:Layered}, 2SF, or 2DP. Define $\max I := \max \{ i \ | \ \exists o: (i,o) \in IO \}$ and $\max O := \max \{ o \ | \ \exists i: (i,o) \in IO \}$. Then:
        
    At each deliberation/ random walk step on a configuration $\{c_{m_1}, \dots , c_{m_x}\}$ with $x \ge 2$, the number of relevant $h$-values is:
    
    $\mathcal O \big(  \min \big\{ 2^{|V|} ,  |V|^{\max O} \big\} \cdot \min \big\{ 2^x ,  x^{\max I } \big\} \big) $ 
\end{proposition}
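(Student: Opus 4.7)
My plan is to bound the count by iterating over all $(i,o) \in IO$, then, for each such pair, separately bounding the number of eligible ingoing configurations $C_{\mathrm{in}}^{(i)}$ (which must be subsets of the current excitation configuration $\{c_{m_1},\dots,c_{m_x}\}$) and the number of eligible outgoing configurations $C_{\mathrm{out}}^{(o)}$ (which are subsets of $V$). Finally I would absorb the factor $|IO|$ into the big-$\mathcal{O}$ constant.

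The first step is to observe that by the definition of relevance in Inductive Bias~\ref{Bias:ManyBody}, a many-body $h$-value $h_{(i,o)}(C_{\mathrm{in}}^{(i)},C_{\mathrm{out}}^{(o)})$ is relevant precisely when $C_{\mathrm{in}}^{(i)} \subset \{c_{m_1},\dots,c_{m_x}\}$ with $|C_{\mathrm{in}}^{(i)}|=i$, $C_{\mathrm{out}}^{(o)}\subset V$ with $|C_{\mathrm{out}}^{(o)}|=o$, and $(C_{\mathrm{in}}^{(i)}\to C_{\mathrm{out}}^{(o)})\in E^{(i,o)}$. Dropping the membership constraint in $E^{(i,o)}$ only increases the count, so an upper bound on the number of relevant $h$-values for fixed $(i,o)$ is the product of the number of $i$-subsets of $\{c_{m_1},\dots,c_{m_x}\}$ and the number of $o$-subsets of $V$.

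For the ingoing count, I would use two simple estimates in parallel: the binomial bound $\binom{x}{i}\le x^i \le x^{\max I}$ on one hand, and the trivial power-set bound $\binom{x}{i}\le 2^x$ on the other. Taking the minimum yields $\min\{2^x,x^{\max I}\}$. The same argument applied to $V$ yields $\min\{2^{|V|},|V|^{\max O}\}$ for the outgoing count, so the number of relevant $h$-values for a fixed $(i,o)$ is at most
\begin{equation*}
\min\{2^{|V|},|V|^{\max O}\}\cdot\min\{2^x,x^{\max I}\}.
\end{equation*}
Summing over $(i,o)\in IO$ and using $|IO|\le \max I\cdot \max O$, which is a constant for fixed interaction cutoffs, gives the claimed asymptotic bound. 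For the variants (Inductive Bias~\ref{Bias:Layered}, 2SF, 2DP), the relevance conditions only further restrict $E^{(i,o)}$ or further restrict which subsets $C_{\mathrm{in}}^{(i)}$ are admissible (e.g. being contained in a single layer, or being contained in the shallowest-layer subset), so the same upper bound still applies.

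The main obstacle I foresee is essentially cosmetic rather than mathematical: one must ensure that the two min-terms are combined correctly and that the $|IO|$ factor is legitimately absorbed into the $\mathcal{O}$-notation, which is the reason the statement treats $\max I$ and $\max O$ as constants of the inductive bias rather than as scaling parameters. No subtle combinatorial argument is needed beyond the standard inequality $\binom{n}{k}\le n^k$ and the trivial $\binom{n}{k}\le 2^n$.
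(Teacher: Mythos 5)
Your proposal is correct and follows essentially the same route as the paper's proof: bound the admissible ingoing sets by subsets of the current configuration and the outgoing sets by subsets of $V$, apply $\binom{n}{k}\le n^k$ and $\binom{n}{k}\le 2^n$ in parallel, and note that the layered variants only shrink the relevant set. The one difference is in handling the sum over $(i,o)$: you absorb $|IO|\le \max I\cdot\max O$ into the $\mathcal O$-constant, whereas the paper sums $\sum_{i=1}^{\min\{\max I,x\}} x^i \le 2x^{\min\{\max I,x\}}$ as a geometric series (using $x\ge 2$), which yields the stated bound with a universal constant and no dependence on $\max I\cdot\max O$ at all; your version matches the stated asymptotics only under your explicit convention that the interaction cutoffs are treated as fixed.
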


\begin{proof}
    Let us bound the number of $C_I, C_O \in \mathcal P(V)$ for the h-values $h_{(i,o)}(C_I, C_O)$ relevant for configuration $\{c_{m_1}, \dots , c_{m_x}\}$. We notice that $C_I$ must be a subset of $\{c_{m_1}, \dots , c_{m_x}\}$. There are at most $2^x$ choices for such subsets. A different bound taking into account the fact that $|C_I| \le \min \{x , \max I\}$ is obtained as follows. For each $i$ such that there is an $o$ with $(i,o) \in IO$, there are $\begin{pmatrix} x \\ i \end{pmatrix} \le x^i$ choices. In total, the number of choices for $C_I$ is upper-bounded by $\sum_{i = 1}^{\min \{ \max I , x \}} x^i \le \frac{x^{\min \{ \max I , x \} + 1}-1}{x-1} \le \frac{x^{\min \{ \max I , x \}+1} } {x-1} \le  \frac{x^{\min \{ \max I , x \}+1}} { \frac{1}{2} x} = 2 x^{\min \{ \max I , x \}}$. Since we already have an upper bound $2^x$ and $x \ge 2$, we can leave out the case $x$ in the minimum of the exponent. 
    
    The bounds for the number of $C_O$ are established in the same way as we just did for the bound $C_I$. Also here, we point out that Inductive Bias \ref{Bias:ManyBody} has all h-values that the Inductive Biases 2 have, and usually more.
\end{proof}

Compared to Proposition \ref{Proposition:Bias1}, this result can provide significant benefits if the number $x$ of excitations in the current configuration is small (note that $x \le |V|$ always), but the number of atomic clips $|V|$ is very large.

\end{document}